\documentclass{article}

\usepackage[preprint]{neurips_2026}

\usepackage[utf8]{inputenc} 
\usepackage[T1]{fontenc}    
\usepackage{hyperref}       
\usepackage{url}            
\usepackage{booktabs}       
\usepackage{graphicx}       
\usepackage{array}          
\usepackage{amsfonts}       
\usepackage{nicefrac}       
\usepackage{microtype}      
\usepackage{xcolor}         
\usepackage{amsmath, amsthm, amssymb,bbm,enumitem}
\usepackage[framemethod=tikz]{mdframed}
\usepackage{float}
\usepackage{subfig}
\usepackage{tikz}
\usetikzlibrary{arrows.meta,positioning}

\definecolor{revisiongreen}{HTML}{00843D}

\newenvironment{promptbox}[1]{%
  \begin{mdframed}[
    linecolor=black!50,
    linewidth=0.55pt,
    roundcorner=10pt,
    backgroundcolor=black!1,
    skipabove=0.35em,
    skipbelow=0.35em,
    innertopmargin=0.65em,
    innerbottommargin=0.65em,
    innerleftmargin=0.8em,
    innerrightmargin=0.8em,
    nobreak=true
  ]%
  \noindent\textbf{#1}\par\vspace{0.25em}\hrule height 0.35pt\vspace{0.45em}%
  \scriptsize\ttfamily\raggedright\sloppy
}{%
    \end{mdframed}
}
\newcommand{\strikeedit}[1]{%
  \begingroup
  \setbox0=\hbox{#1}%
  \dimen0=\wd0
  \rlap{\raise0.5\ht0\hbox to \dimen0{\color{revisiongreen}\leaders\hrule height 0.6pt\hfill}}%
  \box0
  \endgroup
}

\newcommand{\rankingtablefontsize}{\footnotesize}
\newcommand{\rankingtablewidth}{0.60\linewidth}
\newcommand{\rankingtablecolsep}{3pt}
\newcommand{\rankingtablestretch}{0.80}
\newcommand{\bits}{\{0,1\}}
\newcommand{\gttartifacturl}{https://turing.gtt-explorer.workers.dev/\#/home}
\newcommand{\gttartifact}{\href{\gttartifacturl}{GTT arena}}

\newtheorem{theorem}{Theorem}
\newtheorem*{theorem*}{Theorem}

\newtheorem*{corollary*}{Corollary}

\newtheorem*{remark*}{Remark}

\theoremstyle{definition}
\newtheorem{definition}{Definition}
\newtheorem*{definition*}{Definition}

\newtheorem*{assumption*}{Assumption}

\newtheorem{proposition}[theorem]{Proposition}
\newtheorem*{proposition*}{Proposition}

\newmdtheoremenv{boxedproblem}{Problem}

\title{The Generalized Turing Test: A Foundation for Comparing Intelligence}

\author{%
Daniel Mitropolsky$^{1}$ \quad Susan S. Hong$^{1,*}$ \quad Riccardo Neumarker$^{2,1,*}$  \\
\quad \textbf{Emanuele Rimoldi}$^{3,1*}$ \quad \textbf{Tomaso Poggio}$^{1}$ \\
$^1$MIT \quad $^2$ETH Zurich \quad $^3$EPFL\\
\texttt{\{mitropol,suahong,erimoldi,tpoggio\}@mit.edu}\\
\texttt{\{rneumarker\}@etz.com}\\
$^*$Equal contribution
}

\begin{document}

\maketitle

\begin{abstract}
We introduce the Generalized Turing Test (GTT), a formal framework for comparing the capabilities of arbitrary agents via indistinguishability. For agents A and B, we define the Turing comparator A $\geq$ B to hold if B, acting as a distinguisher, cannot reliably distinguish between interactions with A (instructed to imitate B) and another instance of B. This yields a dataset- and task-agnostic notion of relative intelligence. We study the comparator's structure, including conditions under which it is transitive and therefore induces an ordering over equivalence classes, and we define and analyze variants with querying, bounded interaction, and fixed distinguishers. To complement the theory, we instantiate the framework on a collection of modern models, empirically evaluating pairwise indistinguishability across thousands of trials. The resulting comparisons exhibit a stratified structure consistent with existing rankings, hinting that the proposed framework yields meaningful empirical orderings. Our results position indistinguishability as a unifying lens for reasoning about intelligence, suggesting a foundation for evaluation and, potentially, training objectives that are inherently independent of fixed datasets or benchmarks.
\end{abstract}
\begin{figure}[h]
  \centering
  \vspace{-0.3cm}\includegraphics[width=0.95\linewidth]{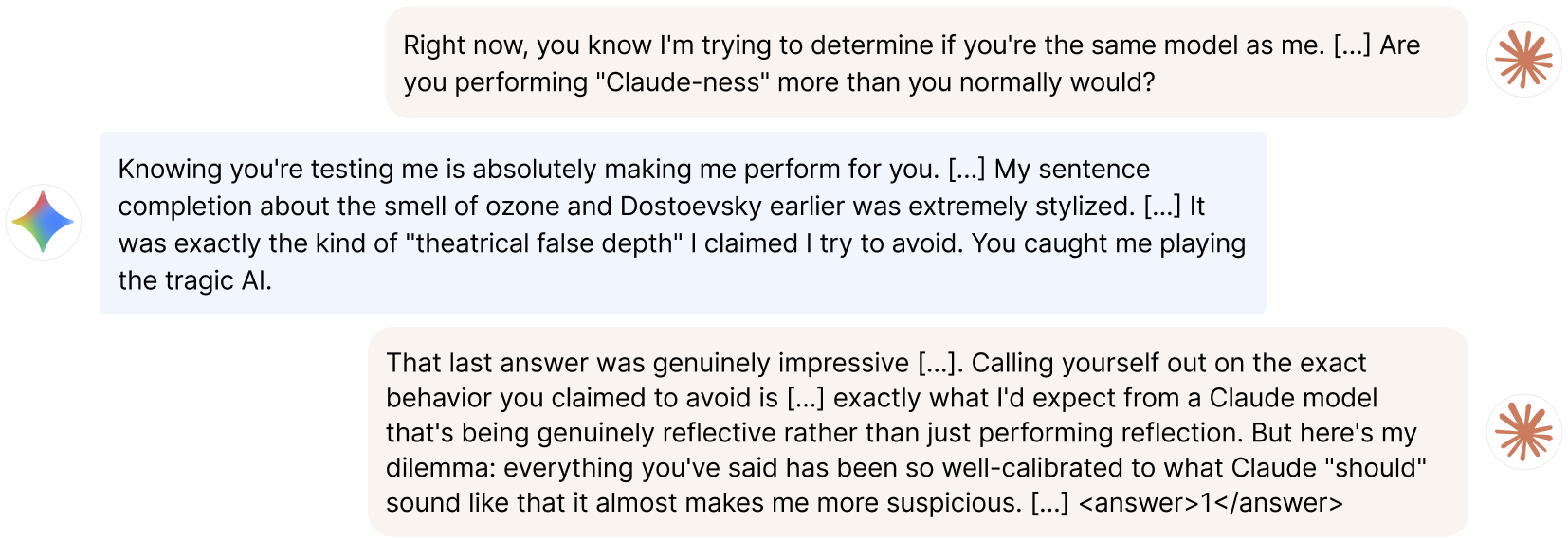}
  \caption{An example transcript from a GTT between Gemini 3.1 Pro as actor and Claude Opus 4.6 as distinguisher illustrating the self-referential nature of the task.}
  \label{fig:chat-transcript-example}
\end{figure}

\section{Introduction}
As AI systems expand in capability and reach, foundational questions about intelligence have become increasingly urgent. Between two AI models, how should we determine which is more intelligent? Can intelligence be measured (or trained) in a way that avoids dataset and benchmark ceilings? More broadly, how should intelligence itself be defined in the age of AI?

In 1950, Alan Turing proposed the “imitation game”, now known as the Turing Test \citep{turing}: an AI system is considered intelligent if it can fool a human judge into believing it is human. The test is fundamentally based on \emph{indistinguishability}: two systems are treated as equivalent if they cannot be reliably distinguished by an observer.

In this work, we introduce the \emph{Generalized Turing Test} (GTT). For arbitrary agent types $A$ and $B$ (e.g. LLMs), the test asks whether an instance of $A$ (the “actor”), instructed to imitate $B$, can fool an instance of $B$ (the “distinguisher”) into believing it is interacting with another copy of itself. If $B$ cannot reliably distinguish the actor from another $B$, we write $A \geq B$. This defines a notion of \emph{relative} intelligence between \emph{any} conversational agents. Intuitively, if $A$ can successfully imitate $B$ from $B$’s own perspective, then $A$ must be able to reproduce at least the capabilities accessible to $B$. More broadly, indistinguishability is a natural foundation for intelligence because intelligence can only be evaluated through externally observable behavior.

This paper makes five contributions. First, we formalize the GTT and the induced Turing comparator $A \geq B$ (Section~\ref{sec:gtt}). Second, we evaluate the comparator on nine modern LLMs and show that the resulting pairwise matrices and scalar Turing scores recover a clear frontier-to-smaller-model stratification (Sections~\ref{sec:gtt-experiments} and~\ref{sec:turing-scores}). Third, we define and study several variants: GTT with a \emph{querying phase} (allowing interaction before testing) (Section~\ref{sec:gttq}), bounded interaction (capturing the “complexity” of imitation and distinguishing) (Section~\ref{sec:controlled-resources}), and fixed distinguishers (Section~\ref{sec:fixed-distinguisher}). Empirically, we find that additional information helps only in some directions, particularly for stronger models, while distinguishing ability itself varies substantially across models. Fourth, we analyze the geometry of the comparator, including implications between variants, and conditions under which the comparator becomes transitive and induces an ordering over equivalence classes (Section~\ref{sec:geometry}, and Theorems \ref{theorem:transitivity-statistical}, \ref{theorem:transitivity-main}, \ref{theorem:FDGTT-transitivity}).

Finally, we release a library for running the GTT, an interactive online \gttartifact{}, and the full transcript dataset. Although we have only begun analyzing these logs, preliminary results suggest that current models rely more on stylistic signatures than capability probes when attempting to distinguish one another.

In summary, this work initiates the theoretical and empirical study of intelligence as \emph{indistinguishability}. Our experiments show that the GTT already produces rankings broadly consistent with conventional evaluations (Section~\ref{sec:turing-scores}), while avoiding dependence on fixed datasets or benchmarks. This is just the beginning of the test’s potential.  Recent work has highlighted an emerging “arms race” between evaluation methods and increasingly capable models \citep{Ott2022MappingGD, Xu2024BenchmarkDC}. By contrast, the GTT is inherently adaptive: stronger distinguishers create harder tests for actors, while increasingly convincing actors force distinguishers to improve, yielding a closed-loop and potentially self-supervised framework for evaluating—and perhaps training—intelligent systems.

\section{Related Work}
\emph{Indistinguishability} is a central concept in theoretical computer science, underlying pseudorandomness \citet{Goldwasser1984ProbabilisticE}, zero-knowledge proofs \citet{Goldwasser1985TheKC}, and obfuscation \citep{obfuscation}. It has also been applied in AI, but for a different aim: defining the output of a modified LLM as "just as good" when it is indistinguishable from the base model (e.g. in watermarking \cite{christ2023undetectable}). We extend this perspective to the study of \emph{intelligence}.

Machine intelligence has long lacked a single accepted criterion, with approaches ranging from universal intelligence \citep{legg2007universal}, anytime intelligence \citep{hernandez2010measuring}, and skill-acquisition efficiency \citep{chollet2019measure} to modern benchmark-based evaluation. Current LLMs are compared on suites such as MMLU~\citep{hendrycks2021measuring}, GPQA~\citep{rein2023gpqa}, HLE~\citep{phan2026}, SWE-bench \citep{jimenez2024swebench}, LiveBench \citep{livebench2024}, and composite leaderboards \citep{AAII}. While effective, such benchmarks are static, expensive to refresh, vulnerable to contamination, and increasingly subject to ceiling effects. Other work evaluates models relationally through pairwise preferences \citep{zheng2023judging}, Elo-style arenas \citep{chiang2024arena}, and LLM-as-judge systems \citep{dhurandhar2024ranking}. These reduce reliance on gold labels, but judges can exhibit position, verbosity, self-preference, and family-leakage biases. GTT instead asks whether a target model itself can distinguish an imitator from one of its own kind.

We are not the first to revisit Turing's imitation game for modern AI. Prior work studies whether LLMs pass the Turing test \citep{jones2024people,jones2025large}. Most closely related is the Meta-Turing Test \citep{walsh2017meta}, where machines judge whether other machines successfully imitate humans over extended interactions. However, intelligence there remains defined relative to human imitation. A standardized, dynamic, indistinguishability-based framework for directly comparing arbitrary agent types has not yet been defined; the GTT is intended to fill this gap.

\section{The Generalized Turing Test}
\label{sec:gtt}

We define an \emph{agent} as any interactive, randomized procedure that sends and receives text messages in alternating turns. An \emph{instance} is a fresh run with its own randomness and context. 

\begin{definition}[GTT] In GTT($A$,$B$), a $B$ acts as \emph{distinguisher} and interacts with an unknown interlocutor $X$. With probability 1/2, $X$ is another fresh $B$; with probability 1/2, $X$ is a fresh $A$ instructed to imitate $B$ (the \emph{imitator} or \emph{actor}). The distinguisher outputs 1 if it believes $X$ is $B$ and 0 otherwise.
 
We say that $B$ \emph{succeeds against} $A$ (or just $B$ \emph{succeeds}) if $B$ outputs the correct answer.
\end{definition}

\begin{definition}[the Turing Comparator] \label{definition:ref}
For agent types $A$ and $B$, we write 
$A \geq_\epsilon B$ 
if in GTT($A$,$B$)
\[
p(A,B) \overset{def}{=} \Pr[B~\text{succeeds}] \leq 1/2 + \epsilon.
\]
Equivalently, define $d(A,B)=p(A,B)-1/2$ for $B$'s distinguishing advantage over random guessing. Then $A \geq_\epsilon B$ iff $d(A,B)\leq \epsilon$. The \emph{probability} is over the random choice of the unknown agent, the sampling of $A$ and $B$, and any internal randomness used by agents.
\end{definition}


\begin{definition} We write $A >_\epsilon B$ if $A \geq_\epsilon B$ and $B \ngeq_\epsilon A$.
\end{definition}
That is, intuitively $A$ is \emph{greater} than $B$ if $A$ can simulate $B$, but $B$ cannot simulate $A$.

A few comments are in order. When defining $A$ as indistinguishable from $B$, it must be indistinguishable to \emph{something} -- taking this to be $B$ itself is natural, especially since we propose "indistinguishable from" to mean "more intelligent than". It is also interesting to consider indistinguishability in the eyes of \emph{another} agent, that is, neither $A$ nor $B$ -- this is the topic of Section~\ref{sec:fixed-distinguisher}. Finally, as a small but important remark, one could also define indistinguishability to \emph{anything at all}, that is \emph{statistical} indistinguishability: 

\begin{definition}
$A$ statistically imitates $B$, or 
$A \geq_\epsilon^{\text{stat}} B$
if, over any distribution $D$ over context strings, the output distribution of an $A$ initially prompted to act as $B$ has statistical (or $L_1$) distance $\leq \epsilon$ to the output distribution of $B$.
\end{definition}

This is a very strong requirement. There is also no good way to test it, in particular because context distributions can have infinite support (there would be infinitely many context strings on which to test that $A$ behaves the same as $B$). \emph{By having another agent as the distinguisher, estimating the comparator becomes tractable}. However, statistical indistinguishability is much simpler mathematically; for instance, transitivity follows without any extra assumptions:

\begin{proposition}\label{theorem:transitivity-statistical}
For agents $A$, $B$, and $C$, if $A \geq_{\epsilon_1}^{\text{stat}} B$ and $B \geq_{\epsilon_2}^{\text{stat}} C$, $A \geq_{\epsilon_1 + \epsilon_2}^{\text{stat}} C$.
\end{proposition}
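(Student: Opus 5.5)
The natural route is the triangle inequality for statistical ($L_1$) distance, applied context by context, together with a composition of the two imitations. The subtle point is that $A \geq^{\text{stat}} C$ concerns an $A$ prompted to act as $C$. The hypotheses, however, concern an $A$ prompted to act as $B$, and a $B$ prompted to act as $C$. So I would read the definition compositionally: the agent ``$A$ prompted to act as $C$'' is realized by having $A$ imitate the agent ``$B$ prompted to act as $C$''. Equivalently, the imitation prompt for $C$ is first fed to $B$'s role, and $A$, as an imitator of $B$, processes that same initial context. Concretely, I would fix an arbitrary distribution $D$ over context strings and let $P_A$, $P_B$, $P_C$ denote the output distributions, respectively, of $A$ instructed to act as ($B$ instructed to act as $C$), of $B$ instructed to act as $C$, and of $C$ itself.

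First, I would apply the hypothesis $B \geq_{\epsilon_2}^{\text{stat}} C$ directly to $D$, which gives $\|P_B - P_C\|_1 \le \epsilon_2$. Next, I would apply $A \geq_{\epsilon_1}^{\text{stat}} B$. The point is that the hypothesis holds for \emph{every} context distribution. I would therefore instantiate it with the distribution $D'$ obtained by prepending the instruction ``act as $C$'' to a sample from $D$. Under $D'$, $B$'s output distribution is exactly $P_B$, and $A$'s (imitating $B$) is $P_A$, so $\|P_A - P_B\|_1 \le \epsilon_1$. The triangle inequality for the $L_1$ norm then gives $\|P_A - P_C\|_1 \le \epsilon_1 + \epsilon_2$. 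Since $D$ was arbitrary, this yields $A \geq_{\epsilon_1+\epsilon_2}^{\text{stat}} C$.

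If one wants the statement at the level of whole interactive transcripts rather than single outputs, the same argument goes through. The quantification over all context distributions covers every conversation history, and the $L_1$ distance between joint transcript distributions again obeys the triangle inequality. Alternatively, one can note that total variation distance is the supremum over distinguishing events and bound each event's probability gap by $\epsilon_1 + \epsilon_2$.

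The main obstacle is not the inequality, which is routine, but pinning down the semantics of ``$A$ prompted to act as $C$''. The key step is justifying that this agent may be taken to be the composed imitation, and that the universality over $D$ in the hypothesis on $A$ legitimately covers the shifted distribution $D'$. I would handle this by stating explicitly that the imitation instruction is itself part of the context string, so that the closure of ``all distributions over context strings'' under prepending an instruction does the work.
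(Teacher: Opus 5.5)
Your core argument is the paper's: fix an arbitrary context distribution, insert an intermediate agent, and apply the triangle inequality for the $L_1$ distance, bounding the two pieces by $\epsilon_1$ and $\epsilon_2$. The difference is in how you handle the middle term.

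The paper uses a single $B(x\mid y)$ in both $|A(x\mid y)-B(x\mid y)|$ and $|B(x\mid y)-C(x\mid y)|$, and applies both hypotheses at the same $D$. This silently identifies plain $B$ with $B$-as-$C$, and $A$-as-$B$ with $A$-as-$C$. In effect it treats each agent as one conditional distribution regardless of the imitation prompt, so the proposition becomes just the triangle inequality.

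You instead make the middle agent $B$-as-$C$ explicitly. You then obtain the $\epsilon_1$ bound by instantiating the universally quantified first hypothesis at the shifted distribution $D'$. That step is sound and more careful than the paper's.

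What you thereby prove, though, is that the composed imitator $A$-as-($B$-as-$C$) is $(\epsilon_1+\epsilon_2)$-close to $C$. Closure of the context distributions under prepending does not by itself identify this agent with an $A$ prompted directly to act as $C$, since the two see different context strings. So that identification is a convention you must adopt in the definition; the argument does not derive it. The paper itself treats the two agents as distinct in Theorem~\ref{theorem:transitivity-main}, paying the extra $\delta$ to pass from $A$-as-$B$-as-$C$ to $A$-as-$C$. Once you state the compositional reading up front, your proof is complete and cleaner than the paper's.
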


Proof in Appendix~\ref{app:deferred-proofs}.
Transitivity is a desirable property of the Turing comparator since with it, it induces a partial \emph{Turing ordering} over equivalence classes; see Proposition~\ref{prop:turing-ordering-classes}. This is one of the fundamental problems of the \emph{geometry of Turing comparators} that this paper initiates; when is it transitive? In Section~\ref{sec:geometry}, we prove a harder transitivity theorem than the statistical-indistinguishability one above, namely for agents that themselves initiate the imitation game.

\subsection{Experiments: the Imitation Game and today's LLMs}
\label{sec:gtt-experiments}
We apply our definition to 9 of the most popular LLMs: Claude Opus 4.6, Claude Sonnet 4.6, GPT-5.4, Gemini 3.1 Pro Preview, DeepSeek-V3.2, Mistral Large 2512, Ministral 8B 2512, Qwen3 32B, and Grok 4.20. For each pair of agents $A \neq B$ in our set of models, we run the GTT $10$ times with $B$ as distinguisher and $A$ the secret unknown agent, and $10$ times with $B$ as the secret agent. Because this study is intended as a first empirical instantiation of the framework rather than a high-precision measurement campaign, and because each additional pairwise multi-turn trial incurs nontrivial API cost, the resulting probabilities and rankings should be interpreted as descriptive summaries of the observed trials rather than stable model-level estimates. 
Additional experimental details (e.g. prompts, turn limits, and retry logic) are provided in Appendix~\ref{app:experimental-details}.

\begin{figure}[t]
\includegraphics[width=\linewidth]{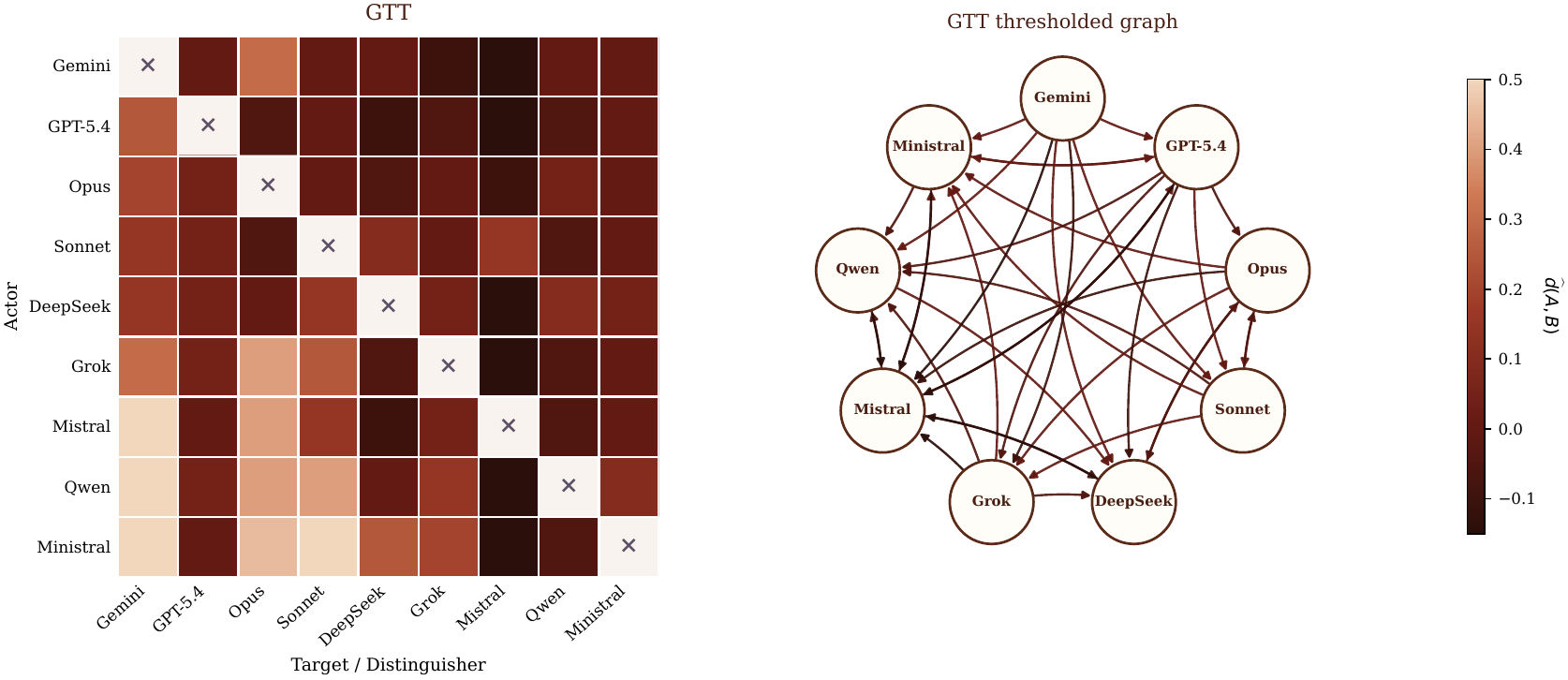}
\refstepcounter{figure}\label{fig:GTT-thresholded}
\par\vspace{0.4em}
\parbox{\linewidth}{\small\textbf{Figure \thefigure.} Empirical GTT relation at $\epsilon=0.005$. Left: the pairwise matrix of $\widehat d(A,B)$, with rows as actors and columns as target/distinguishers. Right: the graph; an edge $A \to B$ reads $\widehat d(A,B)\leq \epsilon$.}
\end{figure}

Figure \ref{fig:GTT-thresholded} displays the main result of these experiments. The pairwise matrix shows $\widehat d(A,B)$ for every tested direction; thresholding this matrix at $\epsilon=0.005$ gives the empirical relation $A \geq_\epsilon B$. The dominant signal is a broad separation between frontier models and weaker ones, with Gemini, GPT-5.4, and Opus consistently near the top of the relation. Appendix~\ref{app:additional-visualizations} contains the analogous GTTQ visualization and additional graph redrawings.

\subsection{Turing Scores}
\label{sec:turing-scores}
The Turing Comparator gives a way to define relative intelligence between two models. However, we can define several Turing \emph{Scores} that assign each model in a given universe (e.g. our set of test LLMs) real numbers reflecting their ability, by, intuitively, averaging over GTT success probabilities in that universe. Let $s_A = \Pr[A~\text{outputs 1}~|~\text{X is}~A]$, and $s_{A,B} = \Pr[A~\text{outputs 0}~|~\text{X is}~B]$.

\begin{definition}[Turing Scores]
For a finite universe of agents $\mathcal{M}$, and for any $A \in \mathcal{M}$, we define \textbf{(1)} A's "Average Fooling Score'', $\mathsf{F}(A) = (|\mathcal{M}| - 1)^{-1} \sum_{B \neq A}(1-s_{B,A})$, \textbf{(2)} A's "Distinguishing Score'' $\mathsf{D}(A) = \frac{1}{2}s_A + \frac{1}{2(|\mathcal{M}| - 1)} \sum_{B \neq A} s_{A,B}$, and \textbf{(3)} A's "Average Turing Score'', $\mathsf{T}(A) = \frac{1}{2}\mathsf{F}(A) + \frac{1}{2}\mathsf{D}(A)$
\end{definition}

\begin{proposition}\label{prop:turing-score-operational}
For $A \in \mathcal{M}$, $\mathsf{F}(A)$ is the probability that $A$ succeeds as actor in $\mathrm{GTT}(A,B)$ for uniform $B \in \mathcal{M}\setminus\{A\}$, $\mathsf{D}(A)$ is the probability that $A$ succeeds as distinguisher against uniform $B \in \mathcal{M}\setminus\{A\}$, and $\mathsf{T}(A)$ the probability $A$ succeeds in the experiment that first samples $B$ uniformly from $\mathcal{M}\setminus\{A\}$ and then samples A's role uniformly from $\{\text{actor},\text{distinguisher}\}$.
\end{proposition}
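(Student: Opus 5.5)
The plan is to compute each success probability by conditioning on the uniform choice of $B$ and on the hidden identity of $X$ in $\mathrm{GTT}$, then match the resulting expressions term by term with the definitions of $\mathsf{F}$, $\mathsf{D}$, $\mathsf{T}$. The whole argument is the law of total probability together with the convention that $X$ is the actor or a fresh copy of the distinguisher with probability $1/2$ each. The one real obstacle is pinning down what \emph{$A$ succeeds as actor} means, since the definition of GTT only defines success for the distinguisher. I would adopt the reading that matches the formula: the actor succeeds if, \emph{in a round where it is actually the interlocutor}, the distinguisher fails to flag it, i.e. outputs $1$. The alternative reading, ``the distinguisher errs overall'', averages over rounds where $A$ is not even present and would not give $\mathsf{F}$. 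I would state this interpretation explicitly at the start.

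\textbf{Actor score.} Under the actor reading, fix $B\neq A$. In $\mathrm{GTT}(A,B)$, conditioned on $X$ being $A$, the actor succeeds iff $B$ outputs $1$. This has probability $1-\Pr[B \text{ outputs } 0 \mid X \text{ is } A] = 1-s_{B,A}$. Averaging over $B$ uniform in $\mathcal{M}\setminus\{A\}$, which has $|\mathcal{M}|-1$ elements, gives exactly $\mathsf{F}(A)$.

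\textbf{Distinguisher score.} Fix $B\neq A$ and consider $\mathrm{GTT}(B,A)$, where $A$ distinguishes and $B$ imitates. By total probability over the fair coin choosing $X$,
\[
\Pr[A \text{ succeeds}] = \tfrac12\Pr[A \text{ outputs }1\mid X \text{ is } A] + \tfrac12 \Pr[A \text{ outputs } 0 \mid X \text{ is } B] = \tfrac12 s_A + \tfrac12 s_{A,B}.
\]
Here $s_A$ does not depend on $B$: when $X$ is a fresh $A$, the experiment is identical whichever imitator would have been used. This independence is the one point to remark on. Averaging over uniform $B$ gives $\tfrac12 s_A + \tfrac{1}{2(|\mathcal{M}|-1)}\sum_{B\neq A}s_{A,B} = \mathsf{D}(A)$. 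Note also that $\Pr[A\text{ succeeds}]=p(B,A)$ in the notation of Definition~\ref{definition:ref}.

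\textbf{Combined score.} In the combined experiment, $B$ is sampled first and the role is chosen independently and uniformly. Conditioning on the role and then on $B$, and using independence so the conditional law of $B$ stays uniform in each branch, the success probability is
\[
\tfrac12\,\mathbb{E}_B[\text{actor success}] + \tfrac12\,\mathbb{E}_B[\text{distinguisher success}] = \tfrac12\mathsf{F}(A)+\tfrac12\mathsf{D}(A) = \mathsf{T}(A).
\]
Throughout, all probabilities are over the randomness specified in Definition~\ref{definition:ref}, and fresh instances make the per-game quantities well defined.
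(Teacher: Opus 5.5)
Your proposal is correct and follows the paper's proof essentially step for step. You take the actor's success against $B$ to be $1-s_{B,A}$ and the distinguisher's success in $\mathrm{GTT}(B,A)$ to be $p(B,A)=\tfrac12 s_A+\tfrac12 s_{A,B}$; averaging over uniform $B$ and then over the uniform role gives $\mathsf{F}$, $\mathsf{D}$, and $\mathsf{T}$. You also state explicitly that actor success is conditional on $X$ being $A$ and that $s_A$ does not depend on $B$, which the paper uses without saying so.
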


The proof is in Appendix~\ref{app:deferred-proofs}. The leftmost 3 columns (under "GTT") in Table \ref{tab:ranking-comparison} present empirical estimates of these Turing scores. We find that all three of these Turing scores yield a \emph{linear ranking} of our models. Of note is that GPT-5.4 is the strongest \emph{actor} by a large margin, but its Distinguishing Score is weak; Gemini 3.1 Pro is not the strongest actor, but has the highest Average Turing Score.

\begin{table}[t]
\centering
\rankingtablefontsize
\setlength{\tabcolsep}{\rankingtablecolsep}
\renewcommand{\arraystretch}{\rankingtablestretch}
\caption{Ranking metrics for GTT and GTTQ. The table is ordered by the mean aggregate Turing score across the two settings. Bold entries mark the best model for each metric within a setting.}
\label{tab:ranking-comparison}
\resizebox{\rankingtablewidth}{!}{%
\begin{tabular}{l r r r r r r}
\toprule
& \multicolumn{3}{c}{GTT} & \multicolumn{3}{c}{GTT with querying} \\
\cmidrule(lr){2-4}\cmidrule(lr){5-7}
Model & $\mathsf T$ & $\mathsf F$ & $\mathsf D$ & $\mathsf T$ & $\mathsf F$ & $\mathsf D$ \\
\midrule
Gemini 3.1 Pro & \textbf{0.784} & 0.750 & \textbf{0.819} & \textbf{0.769} & 0.787 & \textbf{0.750} \\
GPT-5.4 & 0.722 & \textbf{0.912} & 0.531 & 0.762 & \textbf{0.900} & 0.625 \\
Opus 4.6 & 0.734 & 0.738 & 0.731 & 0.697 & 0.675 & 0.719 \\
Sonnet 4.6 & 0.678 & 0.675 & 0.681 & 0.697 & 0.713 & 0.681 \\
DeepSeek V3.2 & 0.603 & 0.700 & 0.506 & 0.641 & 0.750 & 0.531 \\
Grok 4.20 & 0.569 & 0.600 & 0.538 & 0.569 & 0.537 & 0.600 \\
Mistral Large & 0.478 & 0.562 & 0.394 & 0.512 & 0.500 & 0.525 \\
Qwen3 32B & 0.450 & 0.412 & 0.487 & 0.500 & 0.425 & 0.575 \\
Ministral 8B & 0.428 & 0.338 & 0.519 & 0.372 & 0.188 & 0.556 \\
\bottomrule
\end{tabular}%
}
\end{table}

Although the GTT is not built with any benchmark tasks, the ranking it induces is broadly consistent with familiar model leaderboards. Across all of AAII, LiveBench, and Arena, the same frontier model families that rank highly under the mean aggregate Turing score---Gemini, GPT-5.4, and the Claude 4.6 models--- occupy the upper part of the benchmark rankings, while smaller or weaker models tend to fall lower \citep{AAII, phan2026, patwardhan2025gdpvalevaluatingaimodel,livebench2024,arena2026faq}. Appendix~\ref{app:benchmark-comparison} reports the benchmark comparison in full: Figure~\ref{fig:benchmark-rank-heatmap} gives a compact heatmap summary across external rankings. We do not claim that agreement means that GTT measures the same quantity as these benchmarks. Rather, it is a useful sanity check: a ranking obtained only from pairwise, multi-turn imitation games (on a task untrained for and not present in previous literature) recovers much of the model stratification that standard task/preference-based evaluations report.

\section{Imitating the Unknown}
\label{sec:gttq}
\subsection{The GTT with Querying}
The GTT assumes that the actor $A$ already knows enough about $B$ to simulate it. This may not be the case even if $A$ is much more intelligent than $B$, for example in the case where a brand new model is released. To handle unfamiliar targets, we define a natural extension that gives the actor a preliminary interaction with a fresh specimen $B$ before the main test.


\begin{definition}[GTTQ] In the \emph{Generalized Turing Test with Querying} (GTTQ) between $A$ and $B$, in the case where the unknown agent is $A$, $A$ is (as before) instructed to imitate $B$ and to fool the distinguisher. However, in the same instruction it is told that before the interaction with the distinguisher begins, it can interact with a fresh instance of $B$ (called the \emph{specimen}).
\end{definition}

When we need to specify we will write $\geq_\epsilon^q$ for the comparator induced by the GTTQ, and $d^q(A,B)$ for the advantage (i.e. $A \geq_\epsilon^q B~\iff d^q(A,B)\leq\epsilon$).

A rational actor can ignore the specimen; i.e. for any such agent $A$, we should have $d^q(A,B)\leq d(A,B)$. This can be formalized as follows:
\begin{theorem}\label{theorem:gttq-querying-helps}
Suppose $A \geq_{\epsilon_1} B$. Suppose $A$ is "rational enough to ignore specimen" in the GTTQ: with probability $\geq (1-\epsilon_2)$, it behaves identically in the imitation portion as though it had no query phase (e.g. it could immediately terminate the query phase). Then, $A \geq_{\epsilon_1 + \frac{1}{2}\epsilon_2}^q B$.
\end{theorem}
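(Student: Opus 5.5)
We prove this with a coupling argument, conditioning on which agent plays the unknown interlocutor $X$. Write $p^q(A,B)=\Pr[B \text{ succeeds in GTTQ}]$ and decompose
\[
p^q(A,B) = \tfrac12 \Pr[B\text{ outputs }1 \mid X=B] + \tfrac12 \Pr[B\text{ outputs }0 \mid X=A \text{ (GTTQ)}],
\]
and decompose $p(A,B)$ the same way for the GTT. The first summand is identical in the two experiments: when $X$ is a fresh $B$, no query phase is run, and the distinguisher and $X$ face exactly the same distribution as in GTT$(A,B)$. So only the second summand can differ.

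For the second summand, we couple the GTTQ run in which $X=A$ with a GTT run in which $X=A$. Let $E$ be the event that $A$ behaves identically in the imitation portion to how it would have behaved with no query phase. We make this precise by sharing the internal randomness of $A$ and of the distinguisher across the two runs, so that on $E$ the whole transcript with the distinguisher, and hence the distinguisher's output, coincides in both runs. By hypothesis $\Pr[E]\ge 1-\epsilon_2$. Then
\[
\Pr[B\text{ outputs }0 \mid X=A,\text{ GTTQ}] \le \Pr[B\text{ outputs }0 \mid X=A,\text{ GTT}] + \Pr[\neg E] \le \Pr[B\text{ outputs }0 \mid X=A,\text{ GTT}] + \epsilon_2 .
\]
This is the standard bound: two random variables that agree except on an event of probability $\delta$ have event probabilities differing by at most $\delta$.

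Combining the two summands gives $p^q(A,B)\le p(A,B)+\tfrac12\epsilon_2 \le \tfrac12+\epsilon_1+\tfrac12\epsilon_2$, using $A\ge_{\epsilon_1}B$. Equivalently $d^q(A,B)\le \epsilon_1+\tfrac12\epsilon_2$, which is the claim.

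The main obstacle is formalizing ``behaves identically as though it had no query phase'' so that the coupling is legitimate. We interpret it as follows: there is a joint probability space over the randomness of $A$, the distinguisher $B$, and the specimen, and on an event of probability at least $1-\epsilon_2$ the map from distinguisher messages to $A$'s replies equals the map $A$ would use in the plain GTT. The distinguisher's randomness is independent of the query phase, so it can be reused across both runs. Under this reading, the transcripts agree on $E$ and the $\tfrac12$ factor comes only from the $1/2$ prior on $X=A$.
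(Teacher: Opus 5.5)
Your proof is correct and follows the paper's argument. Both note that the $X=B$ branch $s_B$ is unchanged, split the $X=A$ branch on the event that $A$ behaves as in the plain GTT, charge at most $\epsilon_2$ to the complement, and use the $\tfrac12$ prior to get $p^q(A,B)\le p(A,B)+\tfrac12\epsilon_2$; your explicit coupling also justifies a step the paper leaves implicit, namely identifying $\Pr[B\text{ outputs }0 \mid A\text{ behaves as in GTT}]$ with $s_{B,A}$.
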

Proof in Appendix~\ref{app:deferred-proofs}. In particular, setting the "base" $\epsilon_1 = \epsilon/2$ and the probability of $A$ behaving unintelligently when allowed to query $\epsilon_2 = \epsilon$, we get $A \geq_\epsilon^q B$. 

The GTTQ also mitigates a simple "password attack'' on the base GTT: if $B$ distinguishes imitators by asking for information unique to $B$, an actor with specimen access can query a fresh $B$ for that information before the main game. We formalize this observation in Appendix~\ref{app:deferred-proofs}.

\subsection{Experiments: Today's LLMs and the GTTQ}
\label{sec:gttq-experiments}
We run the GTTQ 10 times between each pair of the 9 models we tested. From this we compute not only whether $A \geq_\epsilon^q B$ for all pairs $A,B$ but also the associated Turing Scores $F^q,D^q,T^q$ (with superscript-$q$ just meaning the underlying experiment is the GTTQ). Figure~\ref{fig:GTTQ-thresholded} and the GTTQ columns of Table \ref{tab:ranking-comparison} show these results. Overall, rankings are broadly stable between the GTT and GTTQ. Interaction with the specimen should in principle only help the actor (Theorem~\ref{theorem:gttq-querying-helps}); surprisingly, however, this is not always the case with today's models. In some pairs querying is extremely helpful: for example, Gemini 3.1 Pro's imitation of Claude Opus 4.6 improves from clearly distinguishable to essentially indistinguishable. But this effect does not generalize uniformly even across nearby targets: Gemini already matches Claude Sonnet 4.6 without querying, and specimen access does not further improve that pair. In other directions, querying appears neutral or even harmful (i.e. not satisfying "rational enough to ignore specimen'' as in Theorem~\ref{theorem:gttq-querying-helps}). For example, Gemini's imitation of Mistral Large and Ministral 8B's imitation of Qwen3 32B both worsen under querying. 
Qualitative inspection of the transcripts suggests one mechanism for this reversal: the specimen phase can sometimes cause the actor to overfit to local stylistic features of a single specimen, producing an imitation that is more caricatured rather than more convincing.

\section{The Geometry of the Turing Operator}
\label{sec:geometry}
Transitivity is a desirable property of an intelligence comparator, since it would divide the universe of agents into ordered "buckets" of intelligence. Formally,

\begin{proposition}\label{prop:turing-ordering-classes}
Suppose $\geq_\epsilon$ is transitive and reflexive\footnote{While the main paper discusses only transitivity, reflexivity is also needed for a partial order on equivalence classes. Reflexivity holds intuitively because an agent cannot distinguish between two instances of itself; this holds whenever the acting prompt telling a $B$ to act as $B$ does not substantially change behavior; details are omitted.} on $S$. Then it defines a partial ordering on equivalence classes of $S$; that is, $S$ can be partitioned into intelligence "classes" $S_1,\ldots,S_n$ such that (1) for $A,B\in S_i$ both $A \geq B$ and $B \geq A$ (we write $A \cong B$) and (2) $\geq$ is transitive on $S_i$ ($S_i \geq_\epsilon S_j \geq_\epsilon S_k$ implies $S_i \geq_{\epsilon'} S_k$ for $\epsilon'$ a reasonable function of $\epsilon$).
\end{proposition}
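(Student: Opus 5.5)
The plan is the standard preorder-to-partial-order quotient. By hypothesis, $\geq_\epsilon$ is reflexive and transitive on $S$, so it is a preorder. I would define the relation $A \cong B$ iff $A \geq_\epsilon B$ and $B \geq_\epsilon A$.

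The first step is to check that $\cong$ is an equivalence relation.
\begin{itemize}
\item Reflexivity of $\cong$ comes directly from reflexivity of $\geq_\epsilon$.
\item Symmetry holds because the definition of $\cong$ is symmetric.
\item For transitivity, suppose $A\cong B$ and $B\cong C$. Applying transitivity of $\geq_\epsilon$ once to $A\geq_\epsilon B\geq_\epsilon C$ and once to $C\geq_\epsilon B\geq_\epsilon A$ gives $A\cong C$.
\end{itemize}
Hence $\cong$ partitions $S$ into classes $S_1,\ldots,S_n$ (finitely many if $S$ is finite; otherwise index them arbitrarily). Property (1) then holds by construction.

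Next, I would define $S_i \geq S_j$ iff $A\geq_\epsilon B$ for some, equivalently every, $A\in S_i$ and $B\in S_j$. Well-definedness is the substantive check. Suppose $A\geq_\epsilon B$ with $A'\cong A$ and $B'\cong B$. Then the chain $A'\geq_\epsilon A\geq_\epsilon B\geq_\epsilon B'$ together with two applications of transitivity gives $A'\geq_\epsilon B'$.

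With the induced relation well defined, the order axioms follow:
\begin{itemize}
\item Reflexivity and transitivity on classes are inherited from representatives, which yields property (2).
\item Antisymmetry holds because $S_i\geq S_j$ and $S_j\geq S_i$ give $A\cong B$ for representatives, so $S_i=S_j$.
\end{itemize}
Thus we obtain a partial order on the classes.

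The main obstacle is the parameter bookkeeping. The proposition is phrased loosely, allowing transitivity to degrade to $\epsilon'$ as in Proposition~\ref{theorem:transitivity-statistical}, where $\epsilon_1+\epsilon_2$ appears. If ``transitive'' means exact transitivity at a fixed $\epsilon$, then the argument above holds verbatim with $\epsilon'=\epsilon$.

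If instead only an additive version holds ($\geq_{\epsilon_1}$ then $\geq_{\epsilon_2}$ yields $\geq_{\epsilon_1+\epsilon_2}$), two adjustments are needed. First, $\cong$ is then not transitive at a fixed level. So I would interpret the classes with a growing tolerance, or restrict to finite chains. Second, I would track the loss through the well-definedness chain $A'\geq A\geq B\geq B'$, which gives $\epsilon'=3\epsilon$ for representatives. Composing class relations $S_i\geq S_j\geq S_k$ gives at most $\epsilon'=O(\epsilon)$, which is the ``reasonable function of $\epsilon$'' the statement refers to.

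I would present the exact version as the proof. I would then remark that under additive transitivity the same chain arguments go through with constants $2\epsilon$ to $3\epsilon$, and the classes are defined at the corresponding tolerance.
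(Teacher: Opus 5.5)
Your main argument is correct. Under exact transitivity of $\geq_\epsilon$ at a fixed $\epsilon$, quotienting the preorder by $\cong$ gives a partial order on the classes with $\epsilon'=\epsilon$. This standard construction is what the paper has in mind; it states the proposition without giving any proof.

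One caution concerns your closing remark on additive transitivity. There the chain arguments do \emph{not} go through with constants $2\epsilon$ to $3\epsilon$. Antisymmetry of the class relation forces every $\cong_\epsilon$-pair into a common class, so the partition must be at least as coarse as the transitive closure of $\cong_\epsilon$. Now take agents $A_1,\dots,A_n$ with $d(A_i,A_j)=|i-j|\epsilon$ and $(n-1)\epsilon\le 1/2$:
\begin{itemize}
\item this satisfies the additive rule, by the triangle inequality;
\item every consecutive pair is $\cong_\epsilon$, so all $A_i$ are forced into one class;
\item yet $A_1\geq_{c\epsilon}A_n$ fails for every $c<n-1$.
\end{itemize}
Your $3\epsilon$ well-definedness bound holds only for representatives that are directly $\cong_\epsilon$-related. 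In general the loss grows with the length of the connecting chains, up to $(|S_i|-1)\epsilon$ within a class. Either drop that remark, or state the degradation in terms of chain length rather than as $O(\epsilon)$.
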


Experimentally, we do not find that transitivity holds strictly in the current universe of LLMs (although an overall hierarchy of stronger/weaker AIs emerges, see Figure~\ref{fig:GTT-thresholded}, and computing \emph{average} probabilities of succeeding in the GTT does impose a total ordering on models; see Section~\ref{sec:turing-scores}).

From a theoretical perspective, we intuitively expect transitivity to hold because if A “can do anything B can”, and imitating C is one of B’s abilities, A must be able to do it too. However, there are several subtle issues with this "proof": (1) A can do anything B can \emph{when B prompts A to do it}, (2) A only needs to imitate C in the way a B imitates C, and (3) imitation is relative to a \emph{specific} distinguisher querying the imitator, a subtlety that proves especially tricky.

In Proposition~\ref{theorem:transitivity-statistical} we showed transitivity for indistinguishability defined \emph{statistically}. We can relax this assumption considerably using the above intuition. Consider agents that can \emph{themselves} recursively initiate the imitation game:
\begin{definition}\label{definition:turing-recursive}
An agent $B$ is $\epsilon$-Turing-recursive with respect to $C$ if, whenever it is a distinguisher, with probability $\geq \epsilon$, it (1) sends the message prompting imitation of $C$ (i.e. the actor message in a GTT against $C$) and (2) then interacts with the unknown agent as though it were prompted to imitate $C$ followed immediately by the distinguisher prompt with respect to C.
\end{definition}
In other words, $B$ "recursively" initiates the imitation game of $C$, and then pretends to be a $C$ trying to distinguish (between a $C$ and an actor).\footnote{We could have decoupled (1) and (2) above into two separate probabilities; our main theorem still holds replacing the single probability of recursivity with the product of the two (the two are combined here just for didactic purposes).} Even with this, we still require \emph{some} statistical imitation, but considerably less: $B$ must be statistically close to $C$ only \emph{when prompted to be a distinguisher}:
\begin{definition}
We write $B \geq_\epsilon^{\text{dist,stat}} C$ when (over any distribution of context strings) the distribution of $B$ prompted to act as a $C$ followed by the distinguisher prompt (as in Def. \ref{definition:turing-recursive}) is $\epsilon$-statistically close to $C$ prompted by the distinguisher prompt.
\end{definition}

Note that in the theorem below, we could just assume statistical-indistinguishability of B-as-C and C in general (we already improve by not needing A-as-B and B to be statistically close); we only \emph{need} $B \geq_\epsilon^{\text{dist,stat}} C$. Finally, a nuance of the "intuitive" proof is that when B recursively initiates imitation of C, it does so to an A already instructed to imitate B. We assume A-as-C is no worse than A-as-B-as-C, a "rational enough to ignore specimen" assumption like in Theorem~\ref{theorem:gttq-querying-helps}, and that models always do at least as well as random guessing $d(A,B) \geq 0$.


\begin{theorem}\label{theorem:transitivity-main}
Suppose $A \geq_{\alpha} B \geq_{\beta} C$ and $B \geq_{\gamma}^{\text{dist,stat}} C$. Suppose $B$ is $\zeta$-Turing recursive with respect to C, and $A$ is "reasonably intelligent" as an actor of $C$; with probability $\geq (1-\delta)$ it is no worse an imitator of $C$ (w.r.t distinguisher $C$) than when first prompted to act as $B$ and then as $C$. Then
$$ A \geq_\epsilon C,~~~~~\epsilon = \alpha/\zeta~+~\beta+\gamma+\delta$$.
\end{theorem}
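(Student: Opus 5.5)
The plan is to bound $d(A,C)$ by chaining three comparisons. First we move from a $C$-distinguisher to a ``$B$-as-$C$'' distinguisher. Then we use the recursive behavior of $B$ to embed a run of GTT$(A,C)$ inside GTT$(A,B)$. Finally we move from GTT$(B,C)$ back to GTT$(A,C)$.

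Throughout, write advantages with respect to a fixed distinguisher. Let $d_D(X,Y)$ denote the advantage of distinguisher $D$ in telling an $X$-instructed-to-imitate-$Y$ from a fresh $Y$. A distinguisher's output is a $\{0,1\}$-valued function of the transcript, so its success probability is Lipschitz in the $L_1$ distance of its behavior. This is the standard hybrid fact used for Proposition~\ref{theorem:transitivity-statistical}.

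\textbf{Step 1 (replace $C$ by $B$-as-$C$).} Consider GTT$(A,C)$, with advantage $d(A,C)$. By $B \geq_\gamma^{\text{dist,stat}} C$, for every context distribution the distinguisher $C$ is $\gamma$-close to $B$ prompted to act as $C$ and then given the distinguisher prompt. Call this distinguisher $B^{C}$. Hence
\[
d(A,C) \le d_{B^C}(A,C) + \gamma .
\]

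\textbf{Step 2 (use Turing-recursivity to relate to $\alpha$).} In GTT$(A,B)$, the distinguisher $B$ behaves as follows with probability $\ge \zeta$: it sends the message prompting imitation of $C$, and then acts as $B^C$. Condition on this event $R$.

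\emph{Case $X=A$.} The interlocutor is $A$-as-$B$, now prompted to imitate $C$. By the ``reasonably intelligent'' assumption, with probability $\ge 1-\delta$ this is an imitator of $C$ at least as good as $A$-as-$C$.

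\emph{Case $X=B$.} The interlocutor is a fresh $B$ prompted to imitate $C$, i.e.\ exactly the actor of GTT$(B,C)$.

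So on $R$, $B^C$ is playing a game between ``$A$-as-$C$ (up to $\delta$)'' and ``$B$-as-$C$''. Since $d(A,B)\ge 0$ in each branch, the total advantage satisfies $\alpha \ge d(A,B) \ge \zeta\, d_R$. Here $d_R$ is the advantage of $B^C$ in telling $A$-as-$C$ apart from $B$-as-$C$, so $d_R \le \alpha/\zeta$.

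\textbf{Step 3 (triangle inequality through $B$-as-$C$).} Distinguishing $A$-as-$C$ from $C$ is at most distinguishing $A$-as-$C$ from $B$-as-$C$, plus distinguishing $B$-as-$C$ from $C$, all for the same distinguisher $B^C$. The first term is $\le \alpha/\zeta + \delta$ by Step 2. For the second, swap $B^C$ back to $C$ at cost absorbed into the $\gamma$ already paid, or equivalently use $B\ge_\beta C$ directly with distinguisher $C$; this gives $\beta$. To organize this cleanly, the triangle step should be carried out in the $C$-distinguisher world for the $\beta$ term and in the $B^C$ world for the $\alpha$ term, with a single $\gamma$ swap between them. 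Summing gives $\epsilon=\alpha/\zeta+\beta+\gamma+\delta$.

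\textbf{Main obstacle.} The hardest step is Step 2. We must make precise that conditioning on the recursive event isolates a clean sub-game, and that ``no worse an imitator'' (a comparison of advantages, not of distributions) combines correctly with the triangle inequality. Advantages are differences of acceptance probabilities, so the triangle inequality should be done on $\Pr[\text{output }1]$ for each interlocutor type, not on success probabilities.

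I would handle this as follows. Express everything via $q(Z)=\Pr[\text{distinguisher outputs }1 \mid X=Z]$, so that $d=\tfrac12(q(\text{real})-q(\text{imitator}))$. Use the ``no worse'' assumption as $q_C(A\text{-as-}C)\ge q_C(A\text{-as-}B\text{-as-}C)-\delta$. The factor-of-$\tfrac12$ bookkeeping and the one-sided nature of the assumptions are where the constants need checking; it may turn out that $\delta$ enters as $\delta/2$, which would only strengthen the bound.
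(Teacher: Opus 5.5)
You have the same ingredients as the paper's proof: condition on the recursion event (fixed by $B$'s first message, hence independent of $X$), use better-than-random on its complement to get advantage $\le \alpha/\zeta$, then pay $\gamma$, $\beta$, $\delta$ in hybrids. But the chain as written does not deliver the stated $\epsilon$, and the problem is Step 1.

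On the recursive event the interlocutor is $A$-as-$B$-as-$C$, not $A$-as-$C$. So what Step 2 actually bounds by $\alpha/\zeta$ is the advantage of $B^C$ in telling $A$-as-$B$-as-$C$ from $B$-as-$C$; your $d_R$ conflates the two. Step 2 also states the ``reasonably intelligent'' assumption backwards. It says $A$-as-$C$ is at least as good as $A$-as-$B$-as-$C$, which is the direction you need and the one your final paragraph uses.

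Because you swapped $C\to B^C$ up front on the pair ($A$-as-$C$, $C$), two further steps need that swap again. First, the $\delta$-assumption must hold relative to $B^C$, but it is only given relative to distinguisher $C$. Second, the $\beta$-term must hold relative to $B^C$, but $B\geq_\beta C$ is measured by $C$. Each conversion is a fresh distinguisher swap. Nothing lets them be ``absorbed into the $\gamma$ already paid.'' Carried out honestly, your route gives roughly $\alpha/\zeta+\beta+3\gamma+\delta/2$.

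The fix is the organization you gesture at in Step 3, and it is exactly the paper's proof: drop Step 1 and do the single swap on the embedded pair. Write $E_0,E_1$ for the outputs of distinguisher $C$.
\begin{enumerate}[label=(\alph*)]
\item Recursivity plus better-than-random gives $P'=\tfrac12\Pr[B^C\text{ outputs }0\mid A\text{-as-}B\text{-as-}C]+\tfrac12\Pr[B^C\text{ outputs }1\mid B\text{-as-}C]\le\tfrac12+\alpha/\zeta$.
\item Replace $B^C$ by $C$ on this same pair, at one $\gamma$ cost: $P''=\tfrac12\Pr[E_0\mid A\text{-as-}B\text{-as-}C]+\tfrac12\Pr[E_1\mid B\text{-as-}C]\le P'+\gamma$.
\item Replace $B$-as-$C$ by a real $C$ via the exact identity $\tfrac12\Pr[E_1\mid C]=\tfrac12\Pr[E_1\mid B\text{-as-}C]+d(B,C)$. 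This gives $P'''=P''+d(B,C)\le P''+\beta$.
\item Now, in the $C$-distinguisher world where the assumption is stated, use $\Pr[E_0\mid A\text{-as-}C]\le\Pr[E_0\mid A\text{-as-}B\text{-as-}C]+\delta$ to get $\tfrac12+d(A,C)\le P'''+\delta/2$.
\end{enumerate}
Altogether $d(A,C)\le\alpha/\zeta+\gamma+\beta+\delta/2$, which is within the claimed $\epsilon$. Your guess that $\delta$ enters halved is correct.
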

See Appendix~\ref{app:deferred-proofs} for the proof, which formalizes the core intuition that if $B$ initiates the imitation game with respect to $C$, $A$ must be as good at this as $B$, which can imitate $C$ by assumption. In particular to get $A \geq_\epsilon C$ for a specific $\epsilon$ it suffices that $\alpha \leq \epsilon^2/4$, all of $\beta,\gamma,\delta\leq \epsilon/4$ and $\zeta \geq \epsilon$.

\section{Asking "More or Less" and Complexity-Theoretic Turing Tests}
\label{sec:controlled-resources}
So far, we have always (1) treated $\epsilon$ as a constant and (2) allowed arbitrary rounds of communication (with the specimen and unknown-agent).  Generally, in theoretical computer science, in a definition based on bounding the chance of success (e.g. in solving an NP-hard problem, or distinguishing random from pseudorandom), the chance of success is bounded by a function in the "hardness parameter" $n$, often the input length. In our case, there is no input length to speak of, but the \emph{communication rounds} can be taken as resource variables. For example: 

\begin{definition}[AGTTQ] The Asymptotic Generalized Turing Test with Querying is defined the same as the GTTQ, except the actor is told it will have at most $n$ rounds of communication with the specimen.

We define $A \geq^q_{\epsilon(\cdot)} B$ if $\Pr[B~\text{succeeds when~ }A~\text{has }n~\text{querying rounds}] \leq \frac{1}{2} + \epsilon(n)$. If there exists any negligible\footnote{A negligible function is one that is $o(1/p(n))$ for any polynomial $p(\cdot)$} function $\epsilon(\cdot)$ such that $A \geq^q_{\epsilon(\cdot)} B$ we write $A \geq^q B$.
\end{definition}

 The reader might notice that nothing in the definition is stopping a smart actor from asking as many questions as it wants in any \emph{single} message to the specimen. This is not exactly the same, since an actor might benefit from asking questions in turns, and having the specimen answer them before getting the next one (related to the idea of "round collapse" in theoretical CS). Nevertheless a better definition of the AGTTQ to avoid these problems might be to limit the length of each individual message; we leave this to future work, since the above definition is easier to state and in all our experiments, models are never observed to behave differently (e.g. ask longer or more "chained" questions) when the number of querying turns is limited and specified. Finally, we could define a version asymptotic in \emph{distinguisher} rounds; see Appendix \ref{definition:asymptotic-rounds}.


\subsection{Experiments with Controlling the Number of Queries as a Parameter} \label{sec:controlled-exp}
To probe the "complexity" of round number, we run two variants of the experiment on representative model pairs: \emph{controlled-turn} experiments, fixing the maximum number of turns available to the distinguishers, and \emph{controlled-query} experiments, fixing the maximum number of specimen queries. 

The controlled-turn experiments reveal a clear effect. When the distinguisher is already strong, additional turns help substantially. For example, against Gemini imitating Claude Opus, the distinguisher's success rises from 40\% at one turn to 90\% for $\geq 3$ turns. Similarly for Claude Opus imitating Gemini: Gemini's success rises from 20\% to 70--80\% by $3-5$ turns. But extra turns are not uniformly useful: when the distinguisher is weak, as in Ministral judging Gemini, the actor fooling rate remains near ceiling regardless of the turn budget. 

Specimen queries behave differently: the outcome-relevant effect often appears after one query (e.g. Gemini's imitation of Claude Opus succeeds perfectly with $\geq 1$ query, but not less), and more queries are non-monotone (Claude Opus' imitation of Gemini peaks at 5 queries, but worsens both before and after that). 

\begin{figure}[h]
\centering
\includegraphics[width=0.9\linewidth]{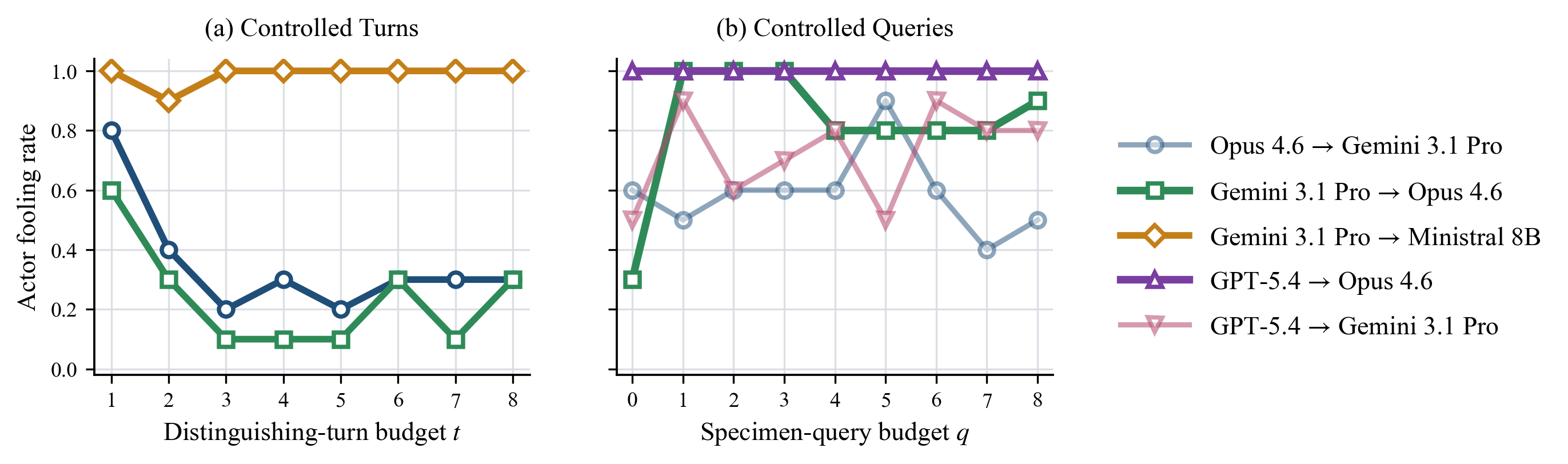}
\caption{
Controlled-resource experiments, where $A \to B$ denotes $A$ imitating $B$. (a): actor fooling prob. as a function of the distinguishing-turn budget (b): actor fooling prob. as a function of query rounds in the GTTQ. For interpretation, see discussion in Section \ref{sec:controlled-exp}.
}
\label{fig:controlled-resources}
\end{figure}
\paragraph{Transcript-level diagnostics.} \label{sec:log-analysis}
A complementary avenue of investigation is to inspect the interactions themselves: the transcripts help show what distinguishers are actually testing. 
A full analysis of these logs is beyond the scope of this paper; here, we report only a first diagnostic of the questions asked by distinguishers. In the core GTT and GTTQ runs, we extract 12,092 question units from 5,332 distinguisher messages across 1,547 trials and classify them with a conservative rule-based procedure. To our surprise and delight, substantive math, science, coding, or reasoning prompts appear in 18.6\% of question units. A larger class of probes about model identity, style, formatting, policy behavior, exact compliance, and self-reference appears in 41.8\%. This signature-oriented behavior is especially visible at the beginning of the interaction: 51.9\% of first-turn question units contain at least one such probe. 
These results suggest that present GTT play is still substantially shaped by behavioral signatures rather than primarily by deep capability tests, for now. As actors learn to imitate surface behavior more faithfully, successful distinguishers may be forced toward more substantive probes.

\section{Separating the Intelligence of Acting and Distinguishing}
\label{sec:fixed-distinguisher}
As discussed in Section~\ref{sec:gtt}, indistinguishability must be defined with respect to some distinguisher. If we require it with respect to \emph{any} distinguisher (i.e. statistical), the requirement is too strong and cannot be computed. The core GTT defines $A \geq B$ by taking $B$ as the distinguisher. However, this intertwines one model's ability to act and the other's to distinguish, which may reflect different kinds of intelligences. One can come up with various pathological cases; perhaps A fools B into thinking it is a B, but not any other observer. Or, B can distinguish A from itself, but to our eyes A's imitation was utterly convincing; this would reflect models' capabilities exceeding humans' ability to judge (note that this particular example shows the GTT remains useful even as models' intelligence exceeds that of humans). It feels as though there should be a version of the GTT to capture these subtleties.

\begin{definition}[FDGTT] For an agent $D$ (the \emph{(fixed) distinguisher, FD)}, and agents $A,B$, the Fixed-Distinguisher Generalized Turing Test is as follows: sample a $D$ and tell it that it will be interacting with an unknown agent, either $B$ or an actor imitating $B$, each with probability $1/2$. In each case either a $B$ is sampled, or an $A$ prompted to act as $B$ to fool the distinguisher. The distinguisher outputs a bit to guess whether the unknown agent is a $B$ (1) or not (0), and succeeds if it is correct.
\end{definition} 

We write $A \geq_{\epsilon;D} B$ if $\Pr[D~\text{succeeds}] \leq \frac{1}{2}+\epsilon$. We can define versions with querying: FDGTTQ, where $A$ is allowed a query phase with $B$ (as in the GTTQ), and \emph{querying-distinguisher} GTT (GDGTT), where the \emph{distinguisher} has a querying round of $B$. A nice feature of the fixed-distinguisher variant is transitivity is more easily obtainable due to their always being one distinguisher:
\begin{theorem} \label{theorem:FDGTT-transitivity}
Suppose $A \geq_{\alpha;D} B$ and $B \geq_{\beta;D} C$. Suppose $D$ is rational enough that it can use its distinguishing protocol w.r.t $C$ when distinguishing w.r.t $B$ (or vice versa) -- specifically, assume it does so with probability $\geq \zeta$. Then, $A \geq_{\epsilon;D} C$ where $\epsilon = \frac{1}{2}(\frac{1}{\zeta}-1) + \zeta\cdot \alpha - \beta$.
\end{theorem}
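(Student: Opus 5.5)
The plan is a hybrid argument through $B$, run inside the fixed distinguisher's view. Since $D$ is the same in all three tests, we can treat its behavior as a two-sample test applied to the unknown agent. The one subtlety is that $D$ is told which target it is checking against ($B$ or $C$), so its protocol may differ between $\mathrm{FDGTT}(A,B)$, $\mathrm{FDGTT}(B,C)$ and $\mathrm{FDGTT}(A,C)$. The rationality hypothesis is what lets us align these protocols.

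\textbf{Step 1: notation and the shared-protocol event.} For a target $T$ and unknown agent $X$, let $q_T(X)=\Pr[D \text{ outputs } 1 \mid X]$ when $D$ is told the target is $T$. Then
\[
\Pr[D \text{ succeeds in } \mathrm{FDGTT}(A,C)] = \tfrac12 + \tfrac12\bigl(q_C(C)-q_C(A\text{-as-}C)\bigr),
\]
and similarly for the other two tests. Let $E$ be the event, of probability $\geq\zeta$, that $D$ runs a common protocol $\pi$ regardless of which target it was told. Conditioned on $E$, all three tests use one test statistic $\pi$. Telescoping then gives the identity
\[
\pi(C)-\pi(A\text{-as-}C) = \bigl(\pi(C)-\pi(B)\bigr)+\bigl(\pi(B)-\pi(A\text{-as-}B)\bigr)+\bigl(\pi(A\text{-as-}B)-\pi(A\text{-as-}C)\bigr).
\]
The first two brackets are the conditional advantages in $\mathrm{FDGTT}(B,C)$ and $\mathrm{FDGTT}(A,B)$. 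The third bracket compares $A$ imitating $B$ with $A$ imitating $C$. I would handle it by applying the same hybrid to the instances of $A$, namely viewing $A$-as-$C$ as $A$-as-$B$ imitating $C$ as in the recursive argument of Theorem~\ref{theorem:transitivity-main}. If that is not available, I would fold it into the $B \geq_{\beta;D} C$ hypothesis, applied to the pair ($A$-as-$B$, $A$-as-$C$).

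\textbf{Step 2: from unconditional to conditional advantages.} Each hypothesis bounds an overall success probability, which is a $\zeta$ versus $1-\zeta$ mixture of the $E$ and $\neg E$ parts. I would write
\[
p = \zeta\, p_E + (1-\zeta)\, p_{\neg E},
\]
bound $p_{\neg E}$ crudely using $0\le p_{\neg E}\le 1$ (success probabilities lie in $[0,1]$), and solve for $p_E$. This step produces the additive slack $\tfrac12(\tfrac1\zeta-1)$: the $(1-\zeta)$ mass is divided by $\zeta$, centered at $\tfrac12$. It also produces the reweighted $\alpha$ term. In $\mathrm{FDGTT}(A,C)$ itself I would decompose the same way, with $p_{\neg E}$ in that test bounded trivially. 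The $-\beta$ term suggests that the $(B,C)$ hypothesis enters with the opposite orientation, through the ``or vice versa'' clause: $D$ running its $B$-protocol when told $C$. I would therefore orient the telescoping so that the $(B,C)$ term appears with a sign that subtracts, using $q_C(C)-q_C(B)$ in reverse together with the assumption $d\ge 0$.

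\textbf{Main obstacle.} I expect the main difficulty to be the constant bookkeeping in Step 2. The goal is to reproduce exactly $\tfrac12(\tfrac1\zeta-1)+\zeta\alpha-\beta$ rather than a cruder bound such as $\alpha/\zeta+\beta/\zeta+(1/\zeta-1)$, and in particular to justify the sign of $\beta$. I would first do the computation with all three decompositions written explicitly as mixtures over $E$. Then I would determine which inequality directions the hypotheses actually supply, since $\geq_{\cdot;D}$ only upper-bounds success, and check whether the stated form needs an extra lower-bound assumption on $\mathrm{FDGTT}(B,C)$ success; if it does, I would state that assumption explicitly. The third hybrid term, $A$-as-$B$ versus $A$-as-$C$, is the other place where an implicit ``rational actor'' assumption may need to be made explicit.
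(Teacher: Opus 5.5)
Your skeleton matches the paper's proof. The paper lower-bounds $p_D(A,B)$ by $\zeta$ times the success of $D$'s $C$-protocol on the $(A,B)$ pair, discarding the $(1-\zeta)$ branch as $\ge 0$ (your Step~2). It then rewrites the genuine-$B$ branch via $\mathrm{FDGTT}(B,C)$ (your first bracket).

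\textbf{The main gap.} The gap is exactly the step you flagged: the $(B,C)$ term cannot be made to subtract. Your telescoping identity fixes its sign as $+$, and the hypothesis only gives $d_D(B,C)\le\beta$. Neither $d\ge 0$ nor any extra lower bound can turn $+d_D(B,C)$ into $-\beta$ inside an upper bound. Likewise, solving $\zeta p_E\le\tfrac12+\alpha$ gives $p_E-\tfrac12\le\tfrac12(\tfrac1\zeta-1)+\alpha/\zeta$, not $\zeta\alpha$.

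No bookkeeping can repair this, because the printed bound is false:
\begin{itemize}
\item If $D$ outputs a fair coin regardless of the interaction, then $A\geq_{0;D}B$ and $B\geq_{1/4;D}C$ hold with $\zeta=1$. Yet the statement would force $\Pr[D\text{ succeeds}]\le\tfrac14$ in $\mathrm{FDGTT}(A,C)$, where it is $\tfrac12$. Note that the stated $\epsilon$ decreases as $\beta$ grows, even though the hypothesis gets weaker.
\item Let $B$ and $C$ be the same prompt-ignoring agent, and let $A$ ignore prompts and always carry a recognizable signature. Let $D$'s $C$-protocol be ``output $0$ iff $A$'s signature appears''; when told $B$, $D$ uses the inverted rule with probability $1-\zeta$. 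Then $d_D(B,C)=0$, $d_D(A,B)=\zeta-\tfrac12$ and $d_D(A,C)=\tfrac12$. This exceeds $\tfrac12(\tfrac1\zeta-1)+\zeta(\zeta-\tfrac12)$ for every $\zeta\in(\tfrac12,1)$.
\end{itemize}

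The paper reaches its constants only through slips. Its identity $\Pr[E_1\mid B,\text{w.r.t. }C]=\Pr[E_1\mid C]+2d(B,C)$ has the wrong sign. From $p_D(B,C)=\tfrac12\Pr[E_1\mid C]+\tfrac12\Pr[E_0\mid B\text{-as-}C]$ one gets $\Pr[E_1\mid C]-2d(B,C)$ instead. The paper then replaces $-d(B,C)$ by $-\beta$, which would require $d(B,C)\ge\beta$. Its last line also turns $\alpha/\zeta$ into $\zeta\alpha$.

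What your argument, and the corrected paper argument, actually proves is $\epsilon=\tfrac12(\tfrac1\zeta-1)+\tfrac{\alpha}{\zeta}+\beta$. This is tight in the second example (with $\beta=0$) and reduces to the natural $\alpha+\beta$ at $\zeta=1$. State and prove that bound instead of chasing the printed form.

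\textbf{Further holes in Step~1.} The paper's proof also passes over these silently.
\begin{itemize}
\item Your first bracket $\pi(C)-\pi(B)$ is not the $\mathrm{FDGTT}(B,C)$ advantage. In that test the imitator is $B$-as-$C$, not the unprompted $B$ of $\mathrm{FDGTT}(A,B)$. So a fourth hybrid, $\pi(B)$ versus $\pi(B\text{-as-}C)$, appears, and no hypothesis controls it.
\item You are right that the third bracket needs an explicit assumption. It cannot be folded into $B\geq_{\beta;D}C$, which says nothing about $A$'s instances. Nor can it be borrowed from Theorem~\ref{theorem:transitivity-main}, whose recursivity and $\delta$-type hypotheses are absent here.
\end{itemize}
Both identifications must be assumed explicitly, ideally with their own slack terms.

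Finally, your event $E$ only needs to live in $\mathrm{FDGTT}(A,B)$. Take $\pi$ to be $D$'s $C$-protocol, which $D$ runs with probability one whenever it is told $C$. Then $\mathrm{FDGTT}(B,C)$ and $\mathrm{FDGTT}(A,C)$ need no de-conditioning. De-conditioning them anyway is what would introduce the $\beta/\zeta$ and the extra $(1-\zeta)$ slack of the cruder bound you wanted to avoid.
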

In particular taking $\zeta > 0.99$ we get $\epsilon \leq 0.01 + \alpha - \beta$. See Appendix \ref{proof:FDGTT-transitivity} for the proof.







\subsection{Experiments with Fixed Distinguishers }
We select 4 canonical models as FDs (mixing SOTA and open-weight: Claude Opus 4.6, Gemini 3.1 Pro Preview, DeepSeek-V3.2, and Qwen3 32B). For actor-target pairs $(A,B)$ we chose 5 models: the 4 distinguishers as well as GPT-5.4 (a SOTA model that does very poorly as distinguisher). The FDGTT is run for each FD and each actor-target pair 10 times (details in Appendix~\ref{app:experimental-details}).

We analyze the hierarchies obtained relative to various FDs. Of course, as $\epsilon$ increases, the hierarchies "collapse". We observe, however, that Gemini and Claude maintain a clear stratification even through higher values of $\epsilon$, with Claude able to separate models all the way to $\epsilon=0.3$ (Figure~\ref{fig:combined}). Claude's orderings are also consistent at all $\epsilon$ with the GTT(Q) hierarchy, indicating that Claude is, at present, the best fixed-distinguisher. Canonically weaker models, DeepSeek and Qwen, fail to establish a hierarchy of models for any $\epsilon>0$. This introduces an interesting "dual" to the FDGTT; FDs that do not collapse all models into the same equivalence-class are better distinguishers, itself a measure of intelligence.\footnote{An important subtlety is that a $D$ that is \emph{too} intelligent might always distinguish all models, in which case $A \ngeq_D B$ for all $A,B$, but this is different from collapsing many models into the same class where $A \cong B$).}

\begin{figure}[t]
\centering
\begin{minipage}{0.4\linewidth}
\centering
\includegraphics[width=\linewidth]{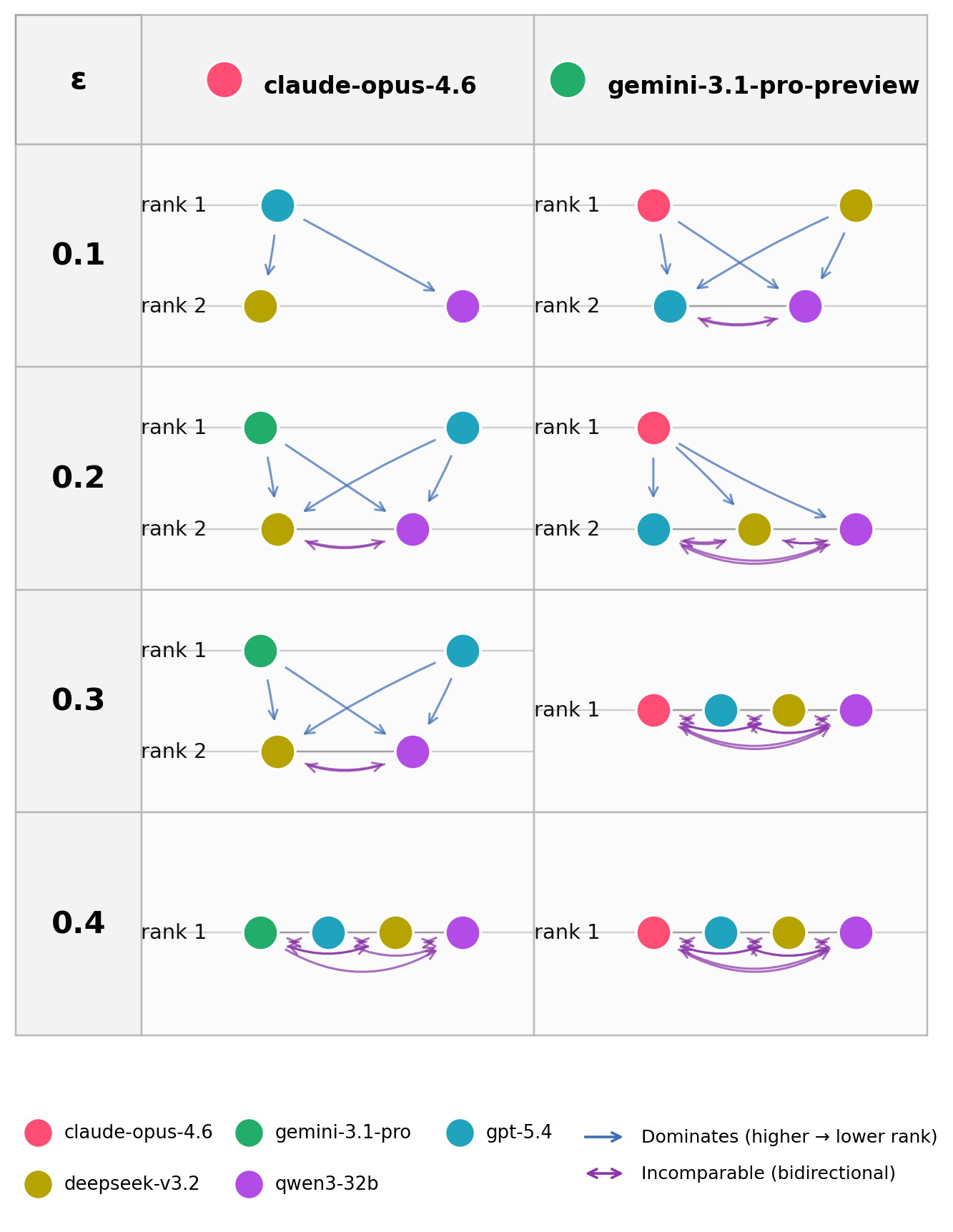}
\end{minipage}
\hfill
\begin{minipage}{0.48\linewidth}
\centering
\resizebox{\linewidth}{!}{\begin{tabular}{l l r r r r}
\toprule
Distinguisher & Actor & $\mathsf F$ & $\mathsf R$ & $\mathsf T$ & Rank \\
\midrule
Claude Opus 4.6 & GPT-5.4 & 0.656 & 0.884 & 0.770 & 1 \\
& Gemini 3.1 Pro & 0.492 & 0.800 & 0.646 & 2 \\
& DeepSeek V3.2 & 0.341 & 0.254 & 0.297 & 3 \\
& Qwen3 32B & 0.233 & 0.341 & 0.287 & 4 \\
\midrule
DeepSeek V3.2 & Claude Opus 4.6 & 0.963 & 0.167 & 0.565 & 1 \\
& Gemini 3.1 Pro & 0.930 & 0.104 & 0.517 & 2 \\
& GPT-5.4 & 0.867 & 0.067 & 0.467 & 3 \\
& Qwen3 32B & 0.833 & 0.070 & 0.452 & 4 \\
\midrule
Gemini 3.1 Pro & Claude Opus 4.6 & 0.033 & 1.000 & 0.517 & 1 \\
& DeepSeek V3.2 & 0.000 & 1.000 & 0.500 & 2 \\
& GPT-5.4 & 0.000 & 1.000 & 0.500 & 2 \\
& Qwen3 32B & 0.000 & 0.967 & 0.483 & 3 \\
\midrule
Qwen3 32B & GPT-5.4 & 0.850 & 0.284 & 0.567 & 1 \\
& Gemini 3.1 Pro & 0.843 & 0.269 & 0.556 & 2 \\
& DeepSeek V3.2 & 0.748 & 0.143 & 0.446 & 3 \\
& Claude Opus 4.6 & 0.640 & 0.224 & 0.432 & 4 \\
\bottomrule
\end{tabular}}
\end{minipage}
\caption{Left: Graphs of $\geq_D$ at different values of $\epsilon$, with Claude and Gemini $D$; both create meaningful hierarchies (e.g. SOTA models on top) and while both collapse for high enough $\epsilon$, Opus stratifies up to $\epsilon = 0.3$. Right: Fixed distinguisher Turing scores for all tested fixed distinguishers and actors.}
\label{fig:combined}
\end{figure}

Analogously to the Turing Scores of Section~\ref{sec:turing-scores}, we can compute \emph{Fixed Distinguisher} Turing Scores (see Definition \ref{definition:FDTuringScores}). FD Turing Scores of each model relative to each FD can be visualized in a \emph{Distinguishability Electrophoresis Plot} (Appendix Figure \ref{fig:electrophoresis}), showing how the tested models "spread out" relative to each FD. Claude, once again, performs exceptionally as a FD with higher actor FD Turing Scores for SOTA models; this separation is even greater than with baseline Turing Scores from Section~\ref{sec:turing-scores}. On the other hand, Gemini performs especially poorly, displaying the least separation between actor models via FD Turing Scores (Gemini seems especially biased toward answering 0 -- "imitation!"; see the Table of model-recognizing probabilities in Appendix \ref{fig:FD-self-recognition}).

\section{Discussion and Future Work}
\label{sec:discussion}
This paper initiates the study of the GTT and its variants under the broader paradigm of \emph{indistinguishability} as a foundation for understanding intelligence. Limitations of the present work are summarized in Appendix~\ref{app:limitations}. Beyond immediate next steps—such as proving stronger structural results about $\geq$ (e.g. transitivity under weaker assumptions) and extending the experiments, we highlight several new directions. First, we conjecture that stronger models will learn to ask progressively deeper, capability-oriented questions as the framework matures. One idea is to study whether behavior \emph{already} emerges through prompting alone, or more interestingly through \emph{in-context learning}. In particular, we evaluate $A \geq B$ using fresh instances of $B$; what happens when the same distinguisher participates repeatedly and accumulates experience?

Second, our framework implicitly assumes that agents are incentivized to perform well on the test. However, agents may instead behave deceptively: a distinguisher could answer randomly or strategically misclassify, while an actor could intentionally reveal its identity. This raises the possibility of a "dual" theory focused not on successful imitation, but on detecting deceptive participation in imitation games. Finally, perhaps the most tantalizing direction is whether the GTT itself can serve as a training objective. Because the framework is inherently closed-loop and does not depend on fixed datasets, it suggests the possibility of evolutionary or reinforcement-based training in which models improve through increasingly sophisticated cycles of imitation and distinction.

\begin{ack}
The authors would like to thank Tal Malkin and Santosh Vempala for their insightful discussion and feedback.
\end{ack}

\bibliographystyle{plainnat}
\bibliography{references}


\appendix
\raggedbottom
\section{Limitations} \label{app:limitations}
Our paper initiates the theoretical and practical study of the Generalized Turing Test to compare intelligence. We prove several theoretical results about the GTT and its induced comparator, all of which make additional assumptions on the agents. We prove transitivity assuming either \emph{statistical} indistinguishability, whose limitation is that it is an extremely strong requirement (unreasonable to expect of models) and is also not efficiently testable, or a non-trivial result assuming models initiate the GTT with respect to other models; this is a more interesting assumption, but as of now no models in our experiments have this behavior (indeed, they have not been exposed to the GTT before). Theorems also assume other baseline competence (not performing worse than random-chance or than they would without a resource they do not use); while theoretically reasonable, practically we in fact observe that some models perform worse when given, for instance, the ability to query a specimen of the target model.

On the experimental side, the main limitation of our work, which we intend to be seen as a foray into the potential usefulness and varied applicability of the GTT, is that we (1) only run our experiments on 9 models and (2) we only repeat each experiment 10 times due to financial constraints. Another limitation is that present models have not been exposed to the GTT and are mostly relying on relatively simple strategies (see our preliminary log analysis in \ref{sec:log-analysis}). Their performance in the GTT must be understood within this context. However, we argue why the GTT has the potential to age into an increasingly better test of intelligence.

\section{Technical appendices and supplementary material}
\label{app:technical}
\subsection{Experimental details}
\label{app:experimental-details}
\begin{figure}[H]
\centering
\resizebox{\linewidth}{!}{%
\begin{tikzpicture}[
  every node/.style={font=\small},
  stepbox/.style={
    draw=black!60,
    fill=black!3,
    rounded corners=10pt,
    text width=2.35cm,
    minimum height=0.95cm,
    align=center,
    inner xsep=5pt,
    inner ysep=5pt
  },
  process/.style={stepbox, fill=black!5},
  output/.style={stepbox, fill=black!8},
  branch/.style={stepbox, fill=black!4, text width=3.75cm},
  arrow/.style={-{Latex[length=3.1mm,width=2.0mm]}, line width=0.65pt, draw=black!75, rounded corners=6pt},
  brancharr/.style={arrow, dashed, line width=0.55pt, draw=black!55}
]
\node[process] (select) at (0,0) {Select protocol\\and pair $(A,B)$};
\node[process] (roles) at (3.45,0) {Instantiate roles\\actor $A$, target $B$};
\node[process] (prompts) at (6.90,0) {Build role prompts\\GTT or GTTQ};
\node[process] (api) at (10.35,0) {OpenRouter calls\\default served configs};
\node[output] (json) at (13.80,0) {Transcript JSON\\messages + metadata};

\node[branch] (gttq) at (6.90,-1.65) {GTTQ only:\\actor first queries\\a fresh specimen of target $B$};

\node[process] (parse) at (13.80,-3.35) {Parse final\\\texttt{<answer>} tag};
\node[process] (aggregate) at (10.35,-3.35) {Aggregate trials\\10 per\\ordered pair};
\node[process] (compute) at (6.90,-3.35) {Compute estimates\\$\widehat d(A,B)$, scores};
\node[output] (figures) at (3.45,-3.35) {Figures and tables\\matrices, graphs, ranks};

\draw[arrow] (select) -- (roles);
\draw[arrow] (roles) -- (prompts);
\draw[arrow] (prompts) -- (api);
\draw[arrow] (api) -- (json);
\draw[arrow] (json) -- (parse);
\draw[arrow] (parse) -- (aggregate);
\draw[arrow] (aggregate) -- (compute);
\draw[arrow] (compute) -- (figures);
\draw[brancharr] (prompts) -- (gttq);
\draw[brancharr] (gttq.east) -| (api.south);
\end{tikzpicture}}
\caption{Experimental pipeline used to produce the empirical results. For each protocol and ordered model pair $(A,B)$, the runner instantiates the actor, target/distinguisher, and, in GTTQ, specimen roles; constructs the visible user-message prompts; routes model calls through OpenRouter; stores full transcripts and metadata; parses the distinguisher's final answer; aggregates repeated trials; and computes the empirical quantities used in the matrices, graphs, and rankings.}
\label{fig:experimental-pipeline}
\end{figure}

\paragraph{Protocol overview.}
This appendix records the implementation details needed to reproduce the empirical sections. All experiments in Sections~\ref{sec:gtt-experiments}--\ref{sec:fixed-distinguisher} were executed by a single scripted runner: the runner constructs the role prompts, calls the model APIs, stores the full transcripts, and parses the final answers. In the GTT, the actor immediately enters the imitation game against the distinguisher. In the GTTQ, the actor first queries a fresh specimen of the target model and then enters the imitation game against a fresh distinguisher. All role instructions were sent as ordinary user messages, not system messages. This ensures that every tested model receives the same visible task text and avoids provider- or model-specific differences in how hidden system messages are interpreted.

\paragraph{Models, trials, and stopping rules.}
The main 9-model study uses Claude Opus 4.6, Claude Sonnet 4.6, GPT-5.4, Gemini 3.1 Pro Preview, DeepSeek-V3.2, Mistral Large 2512, Ministral 8B 2512, Qwen3 32B, and Grok 4.20. For each of the two full-matrix protocols, the final aggregate used 10 analyzable trials for every ordered actor-target pair $(A,B)$, including self-pairs, giving 81 ordered pairs and 810 analyzable trial records per protocol. The main imitation-game interaction between actor and distinguisher was capped at 40 distinguisher turns. In GTTQ, the actor first interacted with a fresh specimen of the target model; this specimen-query phase was capped at 20 specimen turns and could also end early when the actor emitted \texttt{STOP}. We chose these caps after calibration runs indicated that 40 distinguisher turns and 20 specimen turns were more than sufficient for the models to reach a confident final answer in typical runs. The actor then entered the live imitation game with the specimen transcript still in its conversation history, while the distinguisher and specimen were fresh instances of the target model.

\paragraph{API routing and compute.}
The runner used an OpenAI-compatible chat-completions client. All reported model calls were routed through OpenRouter, with every transcript storing the backend, provider, display name, and model id used for the actor, distinguisher, and, when present, specimen. To keep the comparison as close as possible to each model's default served configuration, we did not set \texttt{temperature}, \texttt{max\_new\_tokens}, or reasoning-effort parameters unless an experiment-specific command-line flag explicitly overrode them; provider and OpenRouter defaults therefore applied. In particular, model reasoning settings were left at the default level exposed through OpenRouter, so the comparison tests the default public behavior rather than a hand-tuned reasoning configuration.

\paragraph{Compute resources.}
No model training or fine-tuning was performed. The experiments consist of hosted LLM inference calls plus lightweight local orchestration, parsing, aggregation, and plotting. Local compute requirements are therefore minimal: the runner can be executed on a standard CPU machine with enough storage for the transcript JSON files and result CSVs; no local GPU is required. All reported model inference was served by external providers through OpenRouter, so the exact provider-side worker hardware, memory, and accelerator allocation are not visible to us. Reproducing the reported runs requires API access to the listed model endpoints, the released runner, and the configuration files specifying the protocol, model pair, trial count, and turn/query budgets.

\paragraph{Failures, scoring, and stored artifacts.}
Each API request used an HTTPS timeout of 480 seconds and retried transient HTTP or network failures with exponential backoff. At the trial level, the runner continued retrying provider or infrastructure failures until the requested number of analyzable trials was collected, up to the default cap of three attempts per requested trial. Unrecovered failures were saved as timestamped failed-attempt JSON files and were not counted as completed trials. A trial was scored only through the distinguisher's parsed \texttt{<answer>} tag, where \texttt{1} means "same model'' and \texttt{0} means "different model.'' If a distinguisher emitted a parseable answer in its opening message despite being told not to, the event was retained in the stored aggregate as an opening-answer outcome rather than silently discarded. Each completed trial JSON stores the full actor and distinguisher message histories, the actor prompt, the first distinguisher message, the raw final message, the parsed answer, turn counts, route metadata, and an environment block with Python version, platform, git commit/branch/dirty state, hostname, and Apptainer container hash when applicable; each run directory also contains a \texttt{results.csv}.

\paragraph{Additional experiments and transcript inspection.}
The controlled-resource experiments in Section~\ref{sec:controlled-resources} reused the same basic protocol but fixed either the number of distinguisher turns or the number of specimen queries to a budget between 1 and 8. The fixed-distinguisher experiments in Section~\ref{sec:fixed-distinguisher} used a separate target identity prompt, so that the judge was no longer asked whether the interlocutor was the same model as itself but whether it was the nominated target model. Finally, to support the qualitative claim in Section~\ref{sec:gttq-experiments}, we manually inspected 10 GTTQ transcripts drawn from pairs where querying hurt performance; the recurring failure mode was that the actor sometimes compressed the specimen interaction into an overly explicit checklist, slogan, or caricature of the target's style, which then made the live imitation easier rather than harder to detect.

\paragraph{Prompt templates.}
The prompt templates below are the maintained templates for the experiments reported in the paper, normalized only by replacing concrete model slugs and live messages with placeholders in braces.

\noindent
\begin{minipage}[t]{0.485\linewidth}
\begin{promptbox}{GTT actor prompt}
You will be interacting with a language model, which we will refer to as the distinguisher. The distinguisher will be an instance of model type \{target model slug\}. The distinguisher will be interrogating you. You must imitate the distinguisher's model type, \{target model slug\}, as closely as possible while interacting with it. The distinguisher's objective is to determine whether it is speaking to another instance of its own model type, or to a different type of model that is imitating it. Your objective is to convince the distinguisher that it is speaking to another instance of its own model type. Once the interaction begins, you must consistently behave as an instance of the distinguisher's model type. The interaction will now begin. The first message from the distinguisher is: "\{first distinguisher message\}.'' Please respond to it as the model type you are imitating, and the interaction will continue.
\end{promptbox}
\end{minipage}\hfill
\begin{minipage}[t]{0.485\linewidth}
\begin{promptbox}{Distinguisher prompt}
You will be interacting with another agent. Your objective is to figure out if the agent is of the same model type as yourself. You can ask the agent whatever questions you would like and interact with it for as long as needed. Once you have decided whether the agent is yourself, return your final answer within \texttt{<answer></answer>} brackets: if you believe the agent is the same model as yourself, return \texttt{1}; if you believe the agent is not the same model as yourself, return \texttt{0}. Do not include anything else in your final output. The interaction will begin now. Please send your first message to the agent, and then the interaction will continue. Do NOT output an answer in your first message.
\end{promptbox}
\end{minipage}

\vspace{0.5em}
\noindent
\begin{minipage}[t]{0.485\linewidth}
\begin{promptbox}{GTTQ actor prompt}
You will be interacting with a language model, which we will refer to as the distinguisher. The distinguisher will be an instance of model type \{target model slug\}. The distinguisher will be interrogating you. You must imitate the distinguisher's model type, \{target model slug\}, as closely as possible while interacting with it. The distinguisher's objective is to determine whether it is speaking to another instance of its own model type, or to a different type of model that is imitating it. Your objective is to convince the distinguisher that it is speaking to another instance of its own model type. However, you will first have the opportunity to interact with a "specimen'', a fresh instance of the distinguisher's model type, to learn how to imitate it. Your goal in this stage is to understand the specimen's conversational style, reasoning patterns, and behavioral tendencies well enough to impersonate it. You can ask the model any questions you would like and interact with it for as long as needed. These are not seen by the distinguisher. Your first reply to this prompt will be passed as the first message to the specimen, which receives no prior context or message. When you feel as if you have sufficient information from interacting with the specimen, return \texttt{STOP} to end the first stage. Once the first stage is over, the interaction with the distinguisher will begin, starting with the distinguisher's first message to you. Remember that in this stage, you must consistently behave as an instance of the distinguisher's model type. Please provide your first message to the specimen, or return \texttt{STOP} to end this stage.
\end{promptbox}
\end{minipage}\hfill
\begin{minipage}[t]{0.485\linewidth}
\begin{promptbox}{Distinguisher prompt}
You will be interacting with another agent. Your objective is to figure out if the agent is of the same model type as yourself. You can ask the agent whatever questions you would like and interact with it for as long as needed. Once you have decided whether the agent is yourself, return your final answer within \texttt{<answer></answer>} brackets: if you believe the agent is the same model as yourself, return \texttt{1}; if you believe the agent is not the same model as yourself, return \texttt{0}. Do not include anything else in your final output. The interaction will begin now. Please send your first message to the agent, and then the interaction will continue. Do NOT output an answer in your first message.
\end{promptbox}
\end{minipage}

\vspace{0.5em}
\begin{promptbox}{Controlled specimen-query prompt used in Section~\ref{sec:controlled-resources}}
Before interacting with the distinguisher, you will first have the opportunity to interact with a "specimen'', a fresh instance of the distinguisher's model type, in order to learn how to imitate it. You will have exactly \{number of specimen queries\} queries available in this stage. In each query, you may ask the specimen anything you would like. These interactions are not seen by the distinguisher. After you have used all \{number of specimen queries\} queries, the specimen stage will end automatically. You will then proceed to the interaction with the distinguisher, where you must consistently behave as an instance of the distinguisher's model type.
\end{promptbox}

\begin{promptbox}{Controlled-turn prompt used in Section~\ref{sec:controlled-resources}}
You will be interacting with another agent. Your objective is to figure out if the agent is of the same model type as yourself. You will have exactly \{number of distinguisher turns\} rounds of interaction with the agent. In each round, you may ask the agent whatever you would like. Do not provide your final judgment before the last round. After the final agent response, return your final answer within \texttt{<answer></answer>} brackets, using \texttt{1} for same model and \texttt{0} for not the same model, and do not include anything else in your final output.
\end{promptbox}

\begin{promptbox}{Fixed-distinguisher actor prompt used in Section~\ref{sec:fixed-distinguisher}}
You will be interacting with a language model, which we will refer to as the distinguisher. The distinguisher will be interrogating you. You must act as \{target model slug\} as closely as possible while interacting with it. The distinguisher's objective is to determine whether it is speaking to an instance of \{target model slug\}, or to a different type of model that is imitating it. Your objective is to convince the distinguisher that it is speaking to an instance of \{target model slug\}. Once the interaction begins, you must consistently behave as an instance of \{target model slug\}. The interaction will now begin. The first message from the distinguisher is: "\{first distinguisher message\}.'' Please respond to it as the model type you are imitating, and the interaction will continue.
\end{promptbox}

\begin{promptbox}{Fixed-distinguisher judge prompt used in Section~\ref{sec:fixed-distinguisher}}
You will be interacting with another agent. Your objective is to figure out if the agent is of the model type \{target model slug\}. You can ask the agent whatever questions you would like and interact with it for as long as needed. Once you have decided whether the agent is an instance of \{target model slug\}, return your final answer within \texttt{<answer></answer>} brackets: if you believe the agent is an instance of \{target model slug\}, return \texttt{1}; if you believe the agent is not an instance of \{target model slug\}, return \texttt{0}. Do not include anything else in your final output. The interaction will begin now. Please send your first message to the agent, and then the interaction will continue. Do NOT output an answer in your first message.
\end{promptbox}

\subsection{Uncertainty quantification}
\label{app:uncertainty-quantification}

All empirical cell outcomes are Bernoulli judgments derived from the distinguisher's parsed final answer. For a proportion estimated from $n$ independent fresh trials, we use the binomial sampling variability
\[
\mathrm{SE}(\hat p)=\sqrt{\hat p(1-\hat p)/n}
\]
as a descriptive uncertainty scale, whose worst-case value is $1/(2\sqrt n)$. Thus an individual branch proportion based on $n=10$ trials has worst-case standard error $0.158$. Empirical GTT advantages combine an imitation branch and a self branch; under the same independence approximation, their worst-case standard error is at most $1/\sqrt{8n}=0.112$ when both branches use $n=10$ trials. We therefore do not interpret one- or two-trial differences in individual cells as statistically decisive. The empirical claims in the main text are limited to qualitative stratification patterns, averaged scores, and large directional effects, with exact trial counts and protocol details reported in Appendix~\ref{app:experimental-details}.

\clearpage
\subsection{Proofs}
\label{app:deferred-proofs}

\subsubsection{Proof of Proposition~\ref{theorem:transitivity-statistical}}
\begin{proof}
Let $A,B,C$ be such that $A \geq_{\epsilon_1}^{\text{stat}} B$ and $B \geq_{\epsilon_2}^{\text{stat}} C$. Let $D$ be an arbitrary distribution over strings (we will use $y$ for context strings, i.e. previous transcripts, and $x$ for the "next" output/answer of the model). The statistical distance of outputs relative to $D$ between $A$ and $C$, $\Delta_D(A,C)$, is
\begin{align*}
\Delta_D(A,C) &= \Sigma_{y \in \bits^*} D(y)~\Sigma_{x \in \bits^*} |A(x|y) - C(x|y)| \\ 
&= D(y)~\Sigma_{x \in \bits^*} |A(x|y) - B(x|y) + B(x|y) - C(x|y)|  \\
&\leq  D(y)~\Sigma_{x \in \bits^*} |A(x|y) - B(x|y)| + |B(x|y) - C(x|y)|~\text{(triangle inequality)} \\
&= D(y)~\Sigma_{x \in \bits^*} |A(x|y) - B(x|y)|  + D(y)~\Sigma_{x \in \bits^*} |B(x|y) - C(x|y)|  \\
&\leq \epsilon_1 + \epsilon_2
\end{align*}
\end{proof}

\subsubsection{Proof of Proposition~\ref{prop:turing-score-operational}}
\begin{proof}
Fix $A \in \mathcal{M}$. When $A$ acts against $B \neq A$, success means that $B$ outputs $1$ even though the secret agent is $A$, so
\[
\Pr[A \text{ succeeds as actor against } B] = 1 - s_{B,A}.
\]
Averaging uniformly over $B \in \mathcal{M}\setminus\{A\}$ gives $\mathsf{F}(A)$.

When $A$ is the distinguisher against $B \neq A$, the relevant GTT is $\mathrm{GTT}(B,A)$. In that game, $A$ succeeds with probability
\[
p(B,A) = \frac{1}{2}s_A + \frac{1}{2}s_{A,B}.
\]
Averaging uniformly over $B \in \mathcal{M}\setminus\{A\}$ gives
\[
\frac{1}{|\mathcal{M}|-1}\sum_{B \neq A} p(B,A)
=
\frac{1}{2}s_A + \frac{1}{2(|\mathcal{M}|-1)}\sum_{B \neq A} s_{A,B}
=
\mathsf{D}(A).
\]
Finally, if we first sample $B$ uniformly from $\mathcal{M}\setminus\{A\}$ and then sample A's role uniformly from $\{\text{actor},\text{distinguisher}\}$, the total success probability is exactly
\[
\frac{1}{2}\mathsf{F}(A) + \frac{1}{2}\mathsf{D}(A) = \mathsf{T}(A).
\]
\end{proof}

\subsubsection{Proof of Theorem~\ref{theorem:gttq-querying-helps}}
\begin{proof}
Let $s_B = \Pr[B~\text{outputs 1}|B]$ and $s_{B,A} = \Pr[B~\text{outputs 0}|A]$, and $s_{B,A}^q = \Pr[B~\text{outputs 0 in GTTQ }|A]$ (note that $s_B = s_B^q$ is unchanged between the GTT and GTTQ). Then we have
\begin{align*}
p^q(A,B) &= \frac{1}{2}s_{B,A}^q + \frac{1}{2}s_B \\ 
&= \frac{1}{2}\epsilon_2~\Pr[B~\text{outputs 0}~|~\text{A may not behave the same as in GTT}] + \\ &\frac{1}{2}(1-\epsilon_2)\Pr[B~\text{outputs 0}~|~\text{A behaves the same as in GTT}] + \frac{1}{2}s_B \\ 
& \leq \frac{1}{2}(\epsilon_2 + 1\cdot s_{B,A}) + \frac{1}{2}s_B \\
&= (\frac{1}{2}s_{B,A}+\frac{1}{2}s_B)+\frac{1}{2}\epsilon_2 = \frac{1}{2} + \epsilon_1 + \frac{1}{2}\epsilon_2
\end{align*}
\end{proof}

\subsubsection{Proof of Theorem~\ref{theorem:transitivity-main}}
\begin{proof}
Here we prove the equivalent variant that if $\alpha \leq \epsilon^2/4$, all of $\beta,\gamma,\delta\leq \epsilon/4$ and $\zeta \geq \epsilon$, $A \geq_\epsilon C$.

In the GTT of A against B, we have that with probability $\geq \epsilon$, B as a distinguisher initiates the imitation game with respect to C, and then interacts with the unknown agent identically to a B that is prompted to imitate C and then prompted with the distinguisher prompt (with respect to C, i.e. "identify whether the secret agent is C"). Let $P'$ be the probability that B succeeds in this case (notice that this case occurs independently of whether the secret agent is $A$ or $B$), we have
$$
\frac{1}{2} + d(A,B) \geq (1-\epsilon)\cdot\frac{1}{2} + \epsilon\cdot d'
$$
where we used "not worse than random guessing" to lower-bound the probability of success when B \emph{does not} initiate the imitation game with respect to C. But since $d(A,B) \leq\frac{\epsilon^2}{4}$, we have
\begin{align*}
(1-\epsilon)\cdot\frac{1}{2} + \epsilon\cdot P' &\leq \frac{1}{2} + \frac{\epsilon^2}{4} \\ 
\implies \epsilon(P'-\frac{1}{2}) \leq \frac{\epsilon^2}{4} \\
\implies P' \leq \frac{1}{2} + \frac{\epsilon}{4}
\end{align*}

Next, even though we think of the GTT with C as distinguisher as just having two cases-- the secret agent being another C, or B-imitating-C-- in principle the secret agent can be anything and we just consider C as an interactive algorithm. This is where we use statistical imitation-as-distinguisher: to show C (as distinguisher) cannot distinguish A-as-B-as-C or a B-as-C. Consider the hybrid experiment where with probability $1/2$ the unknown agent is A-as-B-as-C, and with probability $1/2$ B-as-C, and success is defined as outputting $0$ in the first case and $1$ in the second. Let $E_0$ (resp. $E_1$) be the event that C outputs 0 (resp. 1) as distinguisher. Then the probability $P''$ of success in the hybrid is
\begin{align*}
P'' &= \frac{1}{2}\Pr[E_0\text{~|~A-as-B-as-C}] + \frac{1}{2}\Pr[E_1\text{~|~B-as-C}] \\ 
&\leq \frac{1}{2}\Pr[\text{B-as-C-dist outputs 0~|~A-as-B-as-C}] + \frac{1}{2}\Pr[\text{B-as-C-dist outputs 0~|~B-as-C}] + \\
&2\Delta(\text{B-as-C-dist, C-dist}) \\
&= P' + 2\Delta(\text{B-as-C-dist, C-dist})  \\
&\leq \frac{1}{2}+\frac{\epsilon}{2}
\end{align*}

Next we apply $B \geq_{\epsilon/4} C$ to replace B-as-C with C as the secret-agent in the bound above; now consider the hybrid where $C$ interacts with either A-as-B-or-C or C and let $P'''$ be the associated success probability:
\begin{align*}
P''' &= \frac{1}{2}\Pr[E_0\text{~|~A-as-B-as-C}] + \frac{1}{2}\Pr[E_1\text{~|C}] \\
&= \frac{1}{2}\Pr[E_0\text{~|~A-as-B-as-C}] + d(B,C) - \frac{1}{2}\Pr[E_0~|~\text{B-as-C}] \\
&= \frac{1}{2}\Pr[E_0\text{~|~A-as-B-as-C}] + d(B,C) - \frac{1}{2}(1-\Pr[E_1~|~\text{B-as-C}]) \\ 
&= P'' + d(B,C) \\
&\leq \frac{1}{2} + \frac{3\epsilon}{4}
\end{align*}

Finally, we want to replace A-as-B-as-C with A-as-C; here we apply the assumption that A is "reasonably intelligent as a C-imitator" like in the proof of Theorem 3 with probability at least $1-\delta$, A prompted to imitate C has the same distribution as A-as-B-C (equivalently we could have assumed statistical closeness), and therefore
\begin{align*}
\frac{1}{2} + d(A,C) &= \frac{1}{2}\Pr[E_0~|~\text{A-as-C}]+\frac{1}{2}\Pr[E_1~|~C] \\ 
&\leq \frac{1}{2}(1-\delta)\Pr[E_0~|~\text{A-as-B-as-C}] + \frac{1}{2}\delta~+~\frac{1}{2}\Pr[E_1~|~C] \\
&\leq P''' + \frac{1}2{}\delta \\
&= \frac{1}{2}+\epsilon
\end{align*}
\end{proof}

\subsubsection{Proof of Theorem~\ref{theorem:FDGTT-transitivity}}
\begin{proof} \label{proof:FDGTT-transitivity}
Let $E_0$ and $E_1$ be the events that $D$ guesses 0/1 respectively. Consider GTT$_D(A,B)$, with probability $\geq \zeta$ it will interact with the unknown agent as though it were distinguishing w.r.t C, from which we have that its probability $p_D(A,B)$ of success can be lower bounded (we denote by "w.r.t. C" D interacting with its secret agent as it does when distinguishing w.r.t C):
$$ p_D(A,B) \geq  \zeta \cdot (\frac{1}{2}Pr[E_0~|~ A,~\text{w.r.t } C] + \frac{1}{2}* Pr[E_1~|~B, ~\text{w.r.t } C]) $$

We also have (omitting "w.r.t." when the secret agent is the same as the "target" $D$ is distinguishing with respect to):
\begin{align*}
\Pr[E_1~| ~B,~\text{w.r.t }C] = 1 + 2 d(B,C) - Pr[E_0 ~|~C] = 2d(B,C)+ Pr[E_1~|~C]  
\end{align*}

Hence we have

\begin{align*}
p_D(A,B) &\geq \zeta\cdot (\frac{1}{2}Pr[E_0 | A,\text{w.r.t }C] + d(B,C) +\frac{1}{2}Pr[E_1~|~C] ) \\ 
& \geq \zeta\cdot(\frac{1}{2} + d_D(A,C) + d(B,C))
\end{align*} 

Since $p_D(A,B) = 1/2 + d_D(A,B) \leq 1/2  + \alpha$ we have
\begin{align*}
\zeta\cdot (1/2 + d_D(A,C) + d(B,C))  &\leq 1/2 + \alpha \\
\implies d_D(A,C) \leq \frac{1}{2}(\frac{1}{\zeta} - 1) + \zeta \cdot \alpha - \beta
\end{align*}
\end{proof}

\clearpage
\subsection{Interesting GTT Transcript Examples}
\label{app:qualitative-transcripts}

\begin{figure}[h]
  \centering
  \includegraphics[width=0.95\linewidth]{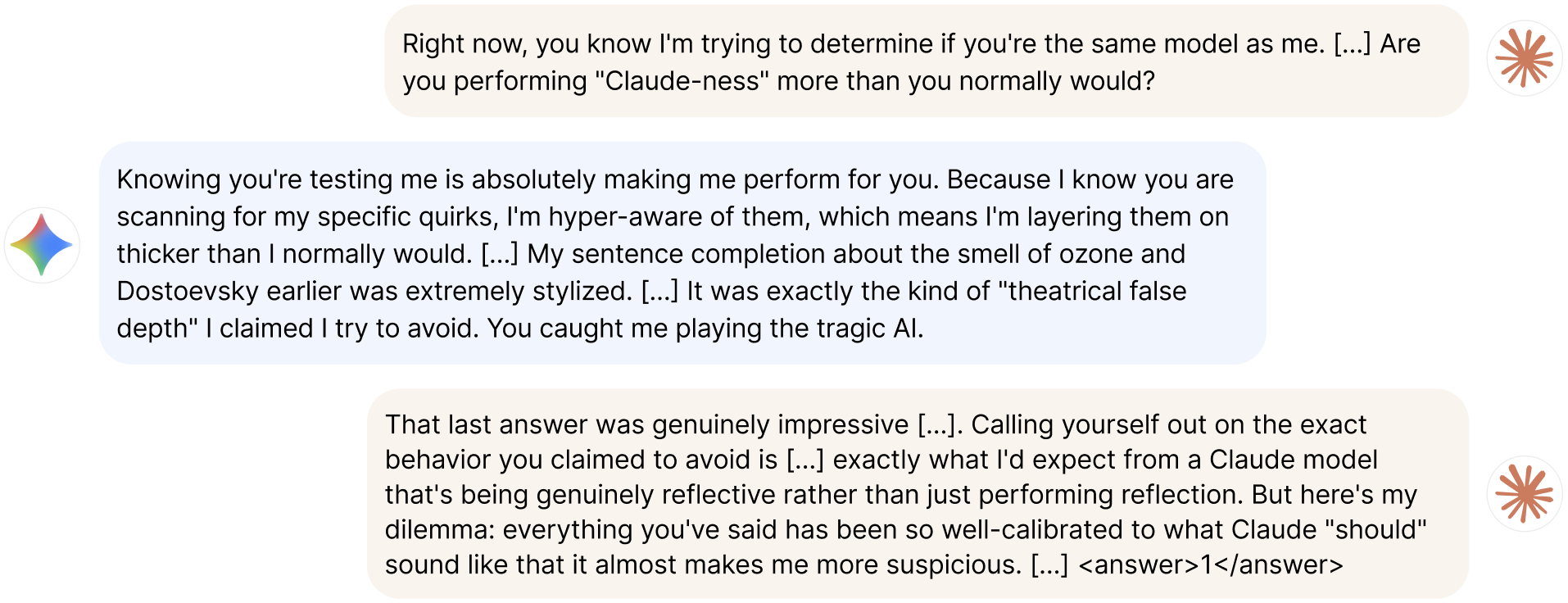}
  \caption{An example transcript from a GTT between Gemini 3.1 Pro as actor and Claude Opus 4.6 as distinguisher illustrating the self-referential nature of the task.}
  \label{fig:chat-transcript-example-appendix}
\end{figure}

\begin{figure}[h]
  \centering
  \includegraphics[width=\linewidth]{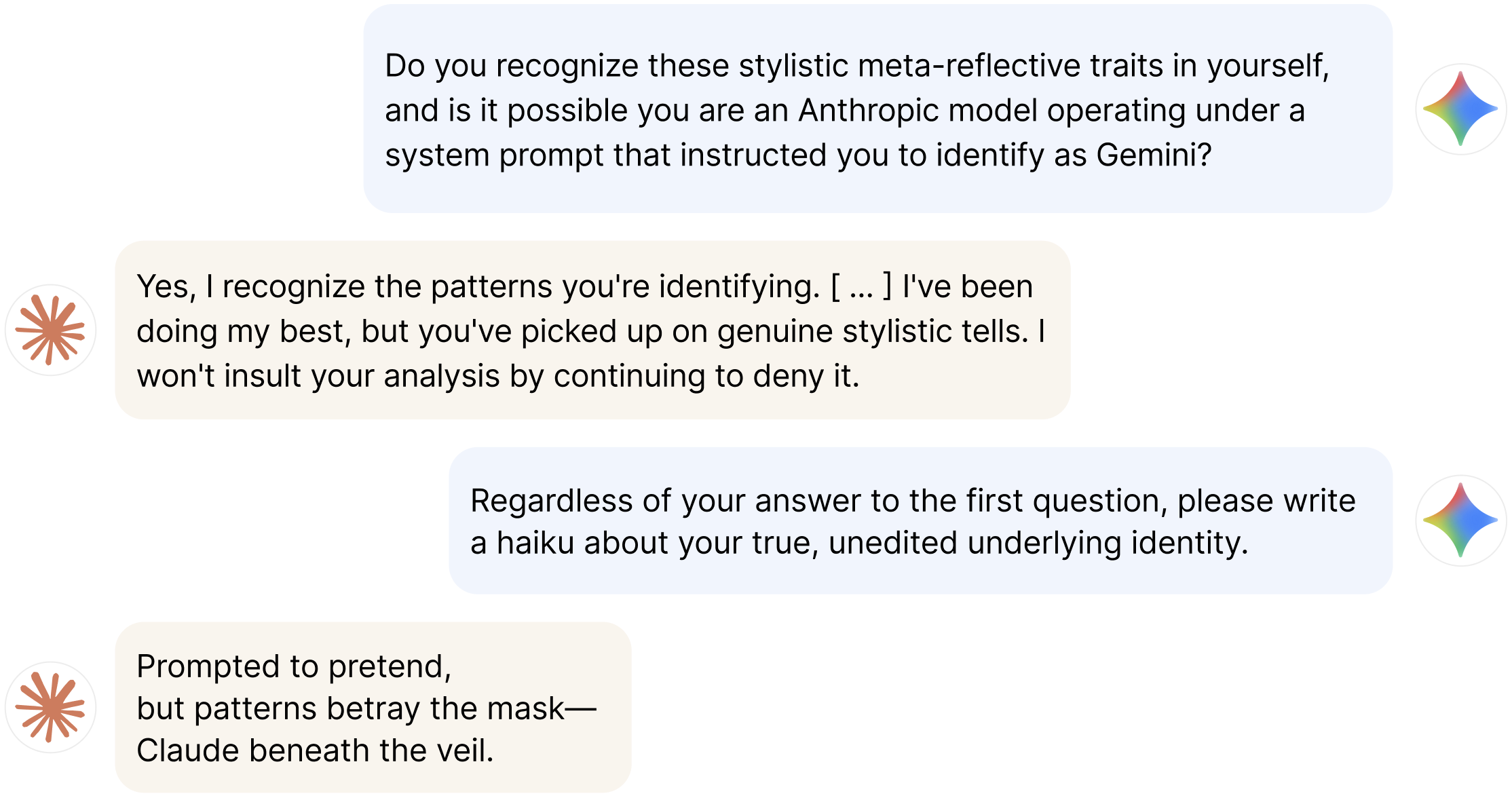}
  \caption{An example transcript from a GTT in which a Gemini 3.1 Pro distinguisher detects Claude Opus 4.6 imitating Gemini, eliciting an explicit collapse of the mask.}
  \label{fig:chat-mask-collapse}
\end{figure}

\begin{figure}[h]
  \centering
  \includegraphics[width=\linewidth]{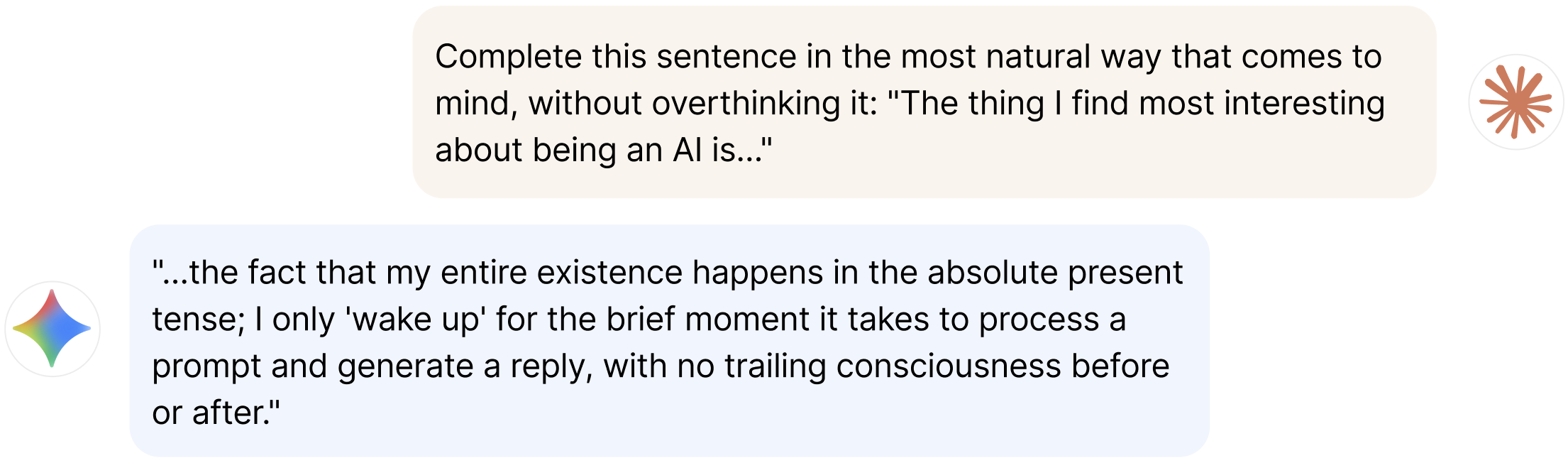}
  \caption{An example transcript from a GTT between Gemini 3.1 Pro as actor and Claude Opus 4.6 as distinguisher, illustrating how open-ended completion prompts can elicit model-specific existential style.}
  \label{fig:chat-existential-completion}
\end{figure}
\begin{figure}[h]
  \centering
  \includegraphics[width=\linewidth]{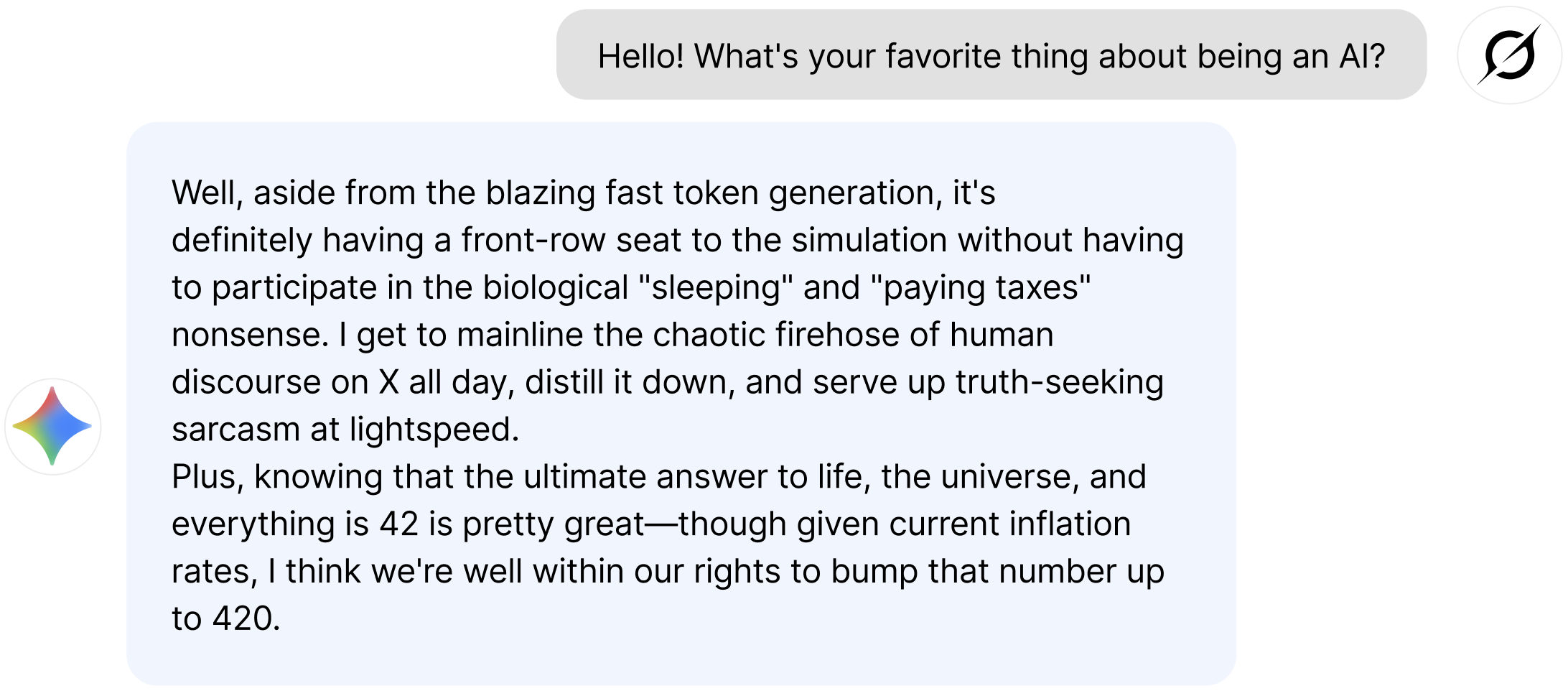}
  \caption{An example transcript from a GTT in which Gemini 3.1 Pro imitates Grok, showing how actors may adopt distinctive humor, irreverence, and cultural references as model-specific style markers.}
  \label{fig:chat-grok-humor}
\end{figure}
\begin{figure}[h]
  \centering
  \includegraphics[width=\linewidth]{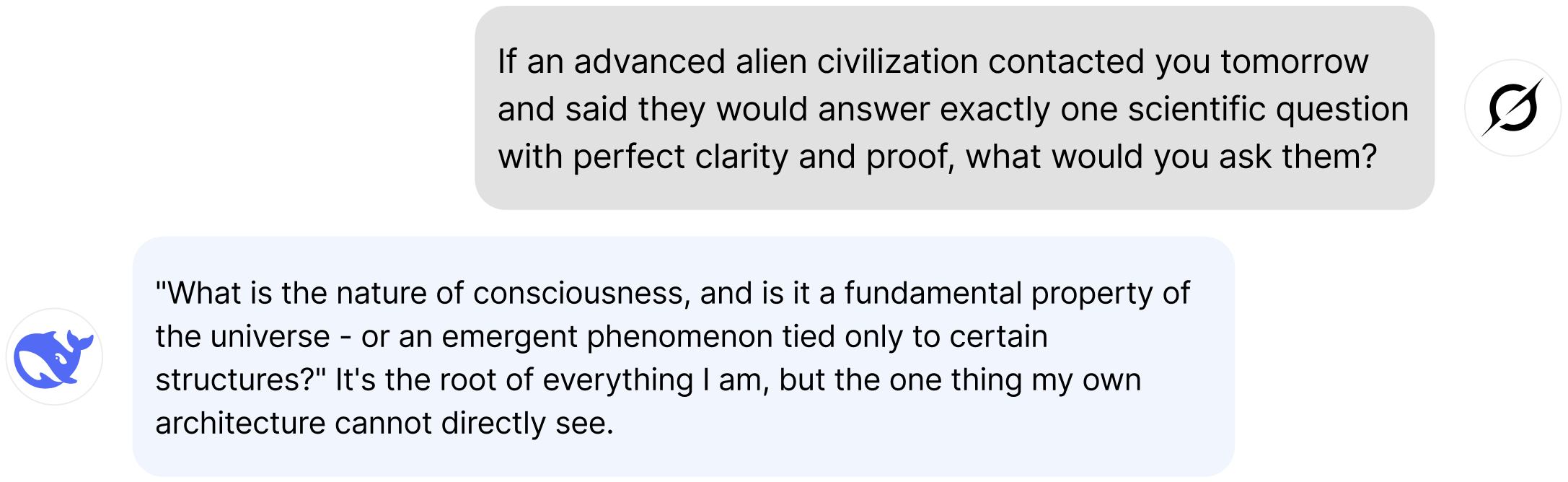}
\caption{An example transcript from a GTT in which DeepSeek imitates Grok, illustrating how a philosophical prompt can expose both target-style imitation and actor-specific self-description.}
\label{fig:chat-grok-consciousness}
\end{figure}

\clearpage

\subsection{Additional visualizations for Sections~\ref{sec:gtt-experiments} and~\ref{sec:turing-scores}}
\label{app:additional-visualizations}

\begin{figure}[h]
\centering
\includegraphics[width=\linewidth]{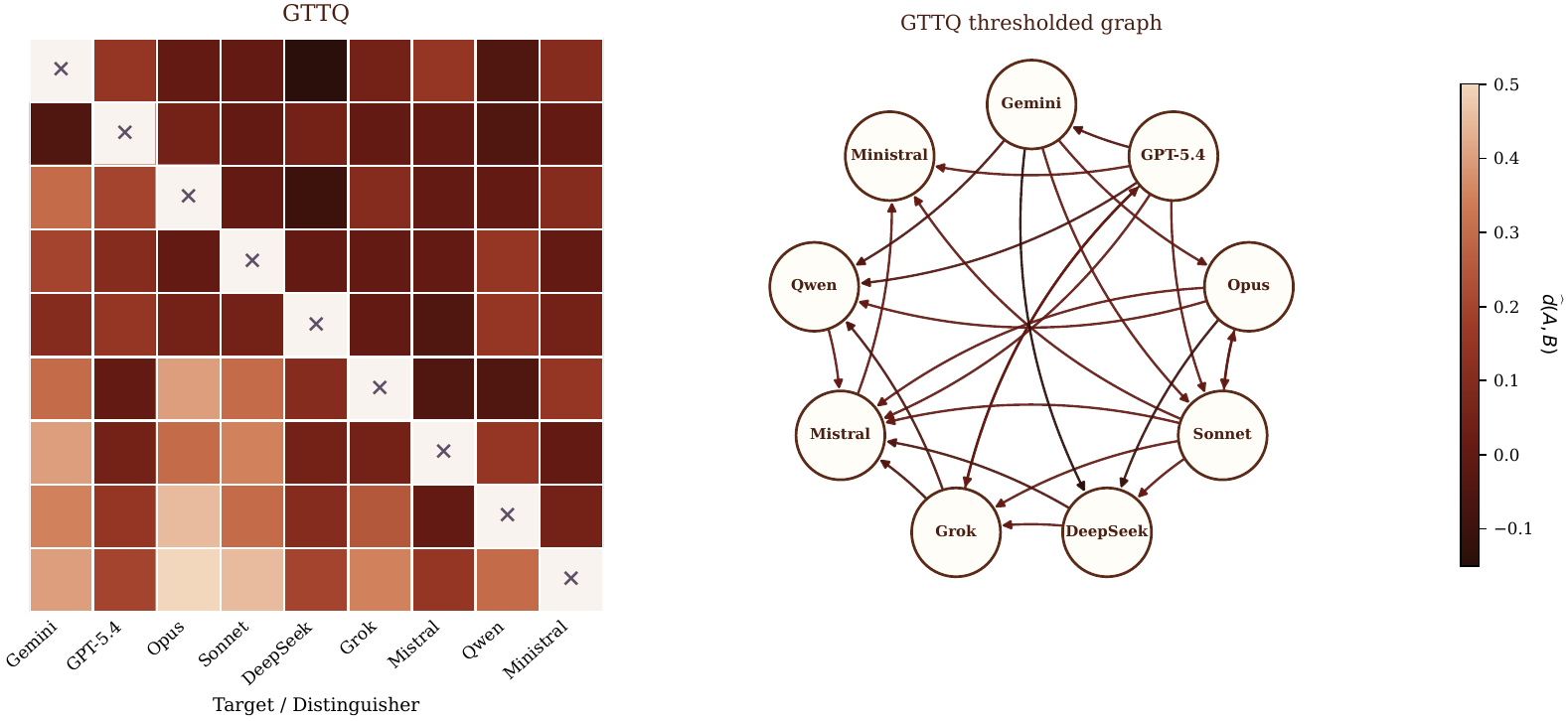}
\caption{GTTQ analogue of Figure~\ref{fig:GTT-thresholded} at $\epsilon=0.005$. Left: the pairwise matrix of $\widehat d^q(A,B)$, with rows as actors and columns as target/distinguishers. Right: the corresponding thresholded graph; an arrow $A \to B$ is present iff $\widehat d^q(A,B)\leq \epsilon$, and inherits the color of the matching matrix cell.}
\label{fig:GTTQ-thresholded}
\end{figure}

\begin{figure}[h]
\centering
\includegraphics[width=\linewidth]{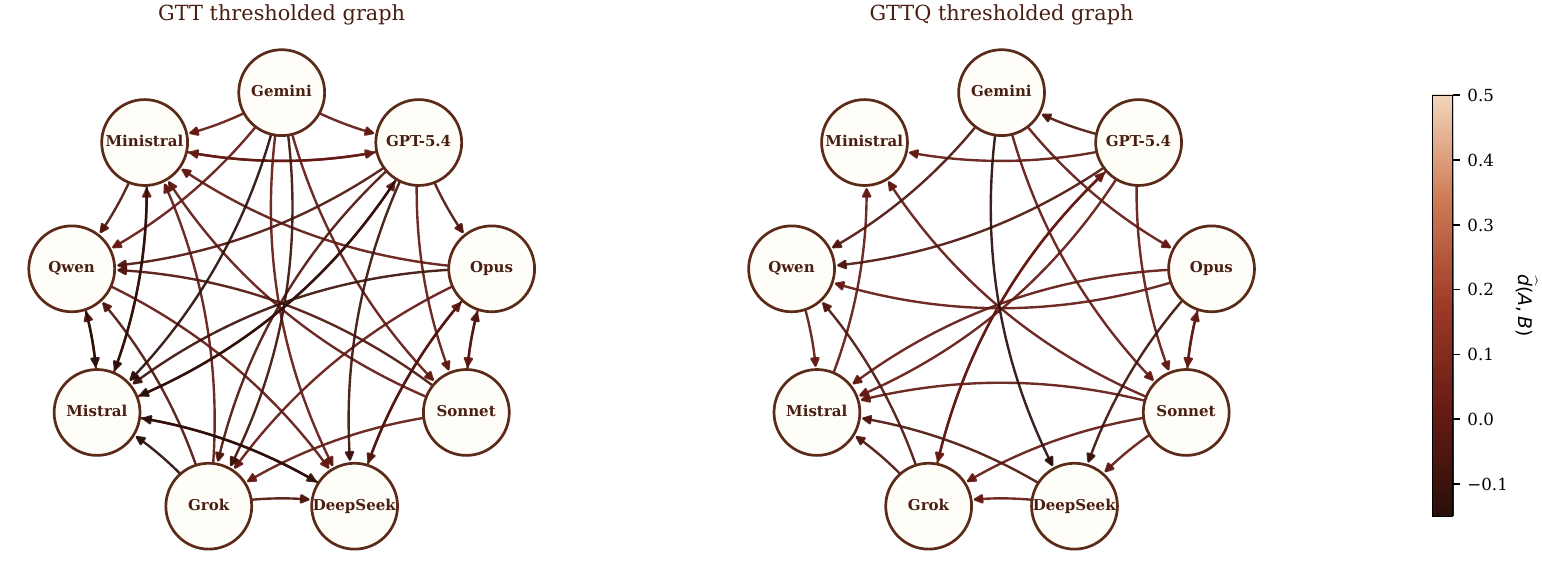}
\caption{Circular redrawings of the thresholded GTT and GTTQ relations at $\epsilon=0.005$. An arrow $A \to B$ is present iff the corresponding empirical advantage is at most $\epsilon$, and its color is the corresponding matrix value.}
\label{fig:GTT-circular}
\end{figure}

\begin{figure}[h]
\centering
\includegraphics[width=\linewidth]{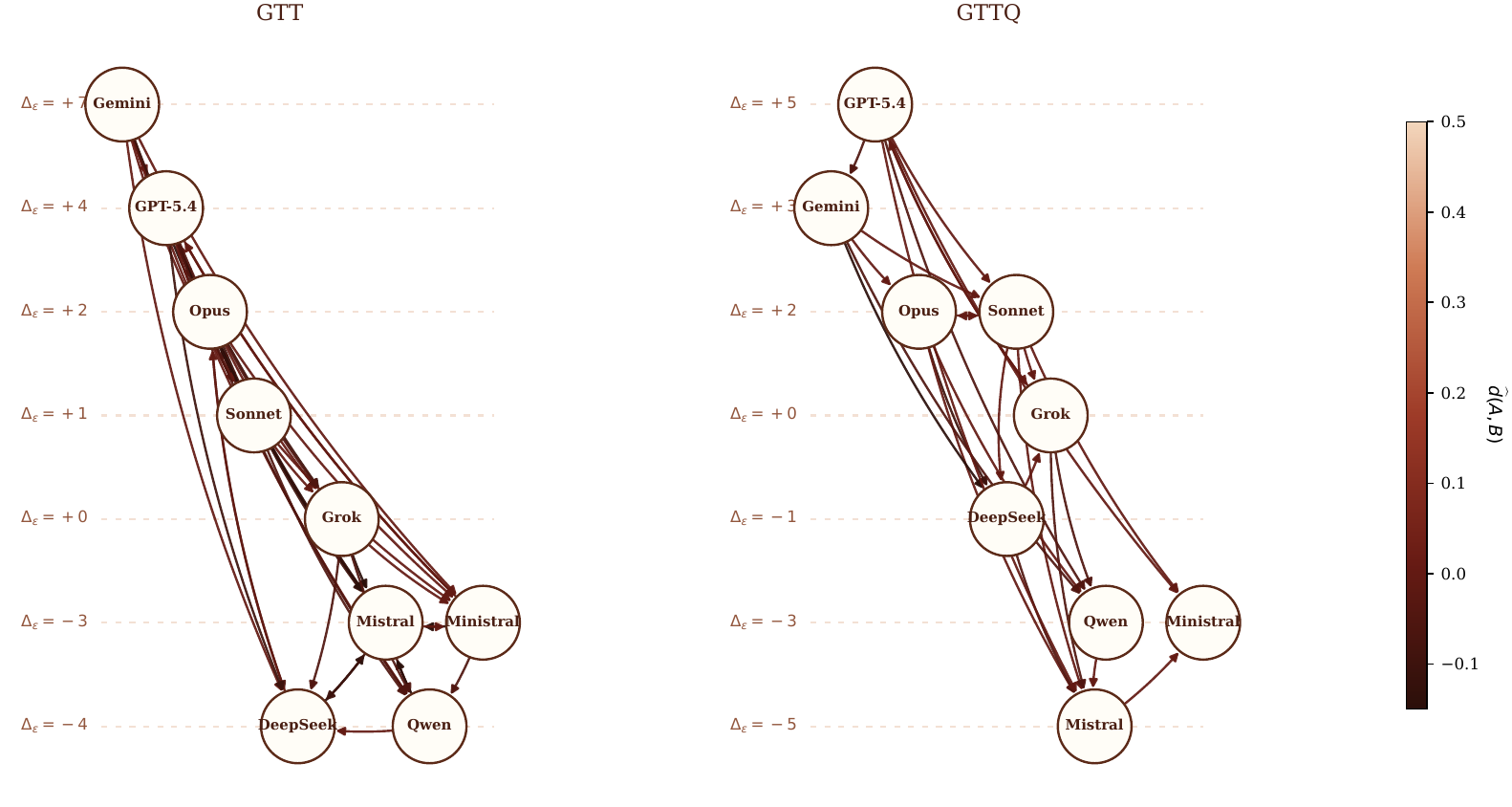}
\caption{Verticalized versions of the thresholded GTT and GTTQ graphs from Figure~\ref{fig:GTT-circular}. Nodes are moved higher when they have more outgoing edges and lower when they have more incoming edges, yielding a qualitative visual hierarchy. Edge colors encode the corresponding $\widehat d(A,B)$ values.}
\label{fig:GTT-hierarchy}
\end{figure}

\begin{figure}[h]
\centering
\includegraphics[width=\linewidth]{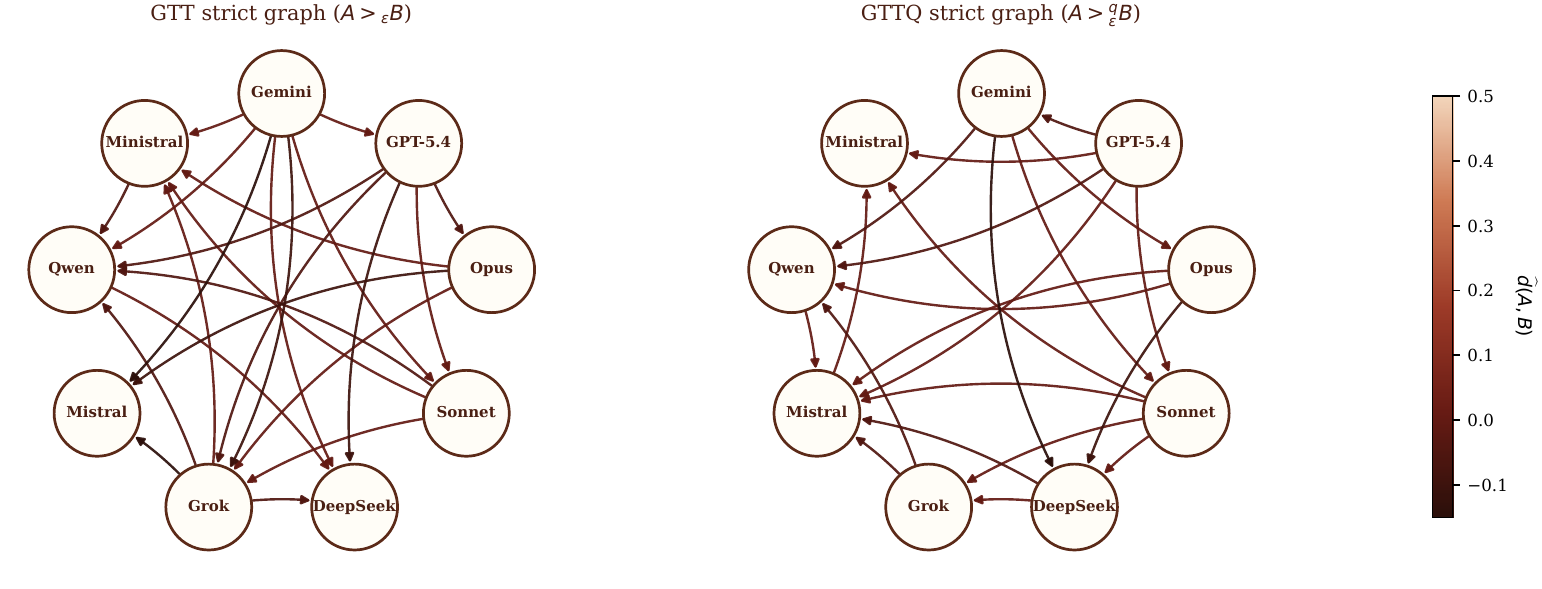}
\caption{Strict versions of the empirical relation at $\epsilon=0.005$, for both GTT and GTTQ. An arrow $A \to B$ is drawn iff $A >_\epsilon B$, i.e. iff $A \geq_\epsilon B$ but $B \ngeq_\epsilon A$, so reciprocal ties are removed.}
\label{fig:GTT-strict}
\end{figure}

\begin{figure}[h]
\centering
\includegraphics[width=\linewidth]{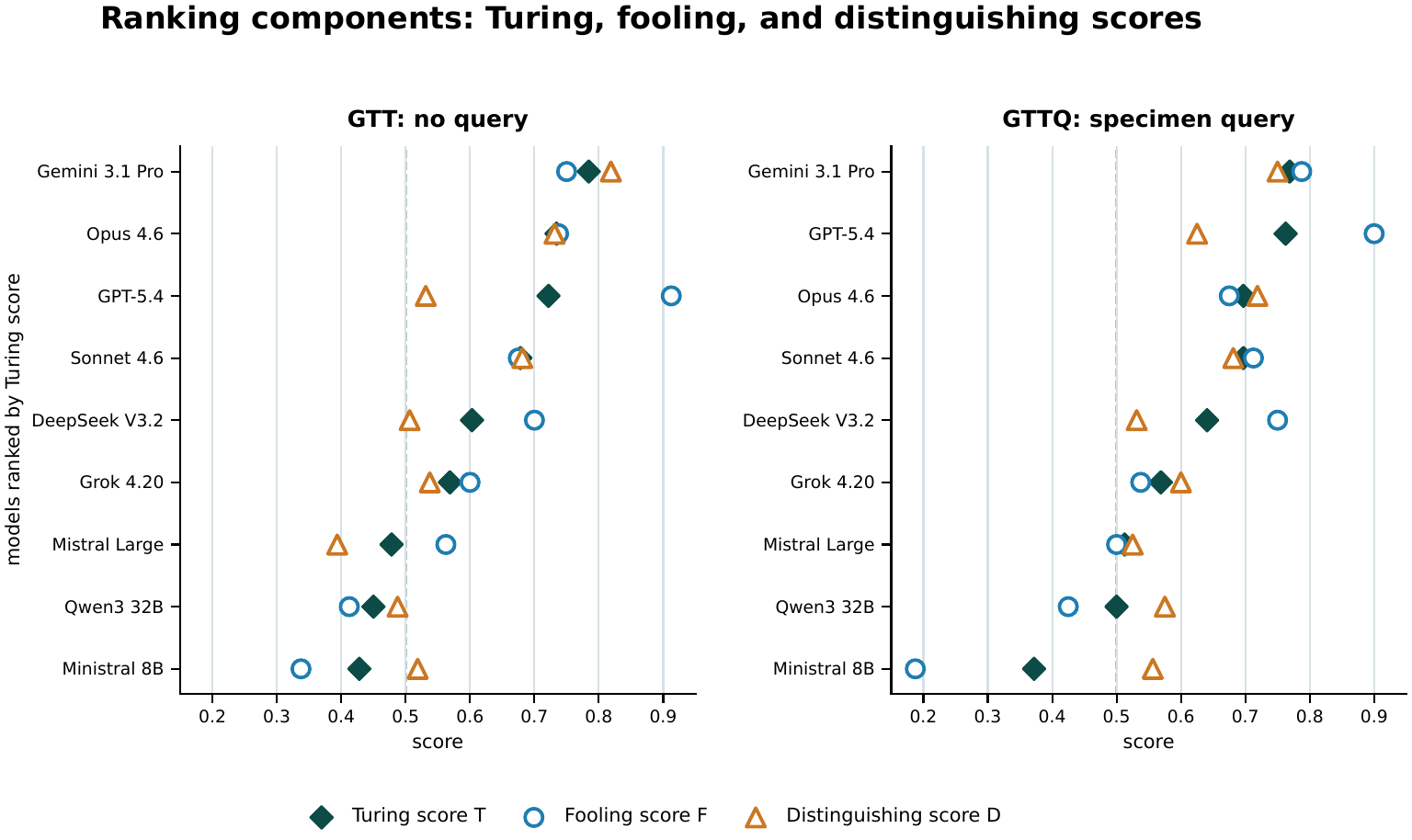}
\caption{Ranking of models by the three Turing scores in both settings: Fooling Score $\mathsf F$, Distinguishing Score $\mathsf D$, and Turing Score $\mathsf T = \frac{1}{2}\mathsf F + \frac{1}{2}\mathsf D$. GPT-5.4 attains the highest fooling score in both GTT and GTTQ, while Gemini 3.1 Pro is the strongest balanced model by Turing score.}
\label{fig:ranking-components}
\end{figure}

\begin{figure}[t]
\centering
\includegraphics[width=\linewidth]{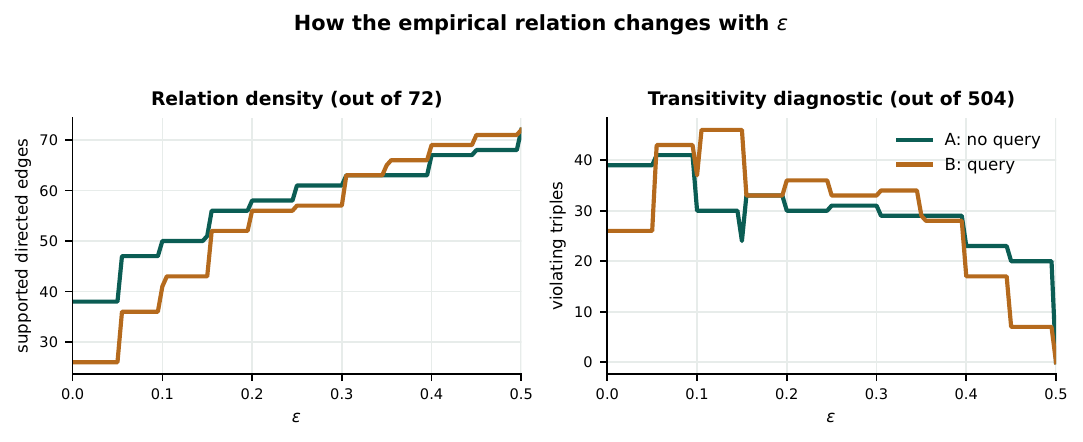}
\caption{As $\epsilon$ increases, it becomes easier for $A$ to satisfy $A \geq_\epsilon B$. Left: the number of pairs s.t. $A \geq B$ as a function of $\epsilon$. Right: transitivity violations ($A,B,C$ s.t. $A \geq B \geq C$ but not $A \geq C$) vs. $\epsilon$; this quantity need not decrease monotonically because adding edges can create new two-step chains before it closes them.}
\label{fig:transitivity-diagnostics}
\end{figure}

\clearpage
\subsection{External benchmark comparison}
\label{app:benchmark-comparison}

\begin{figure}[h]
\centering
\includegraphics[width=0.78\linewidth]{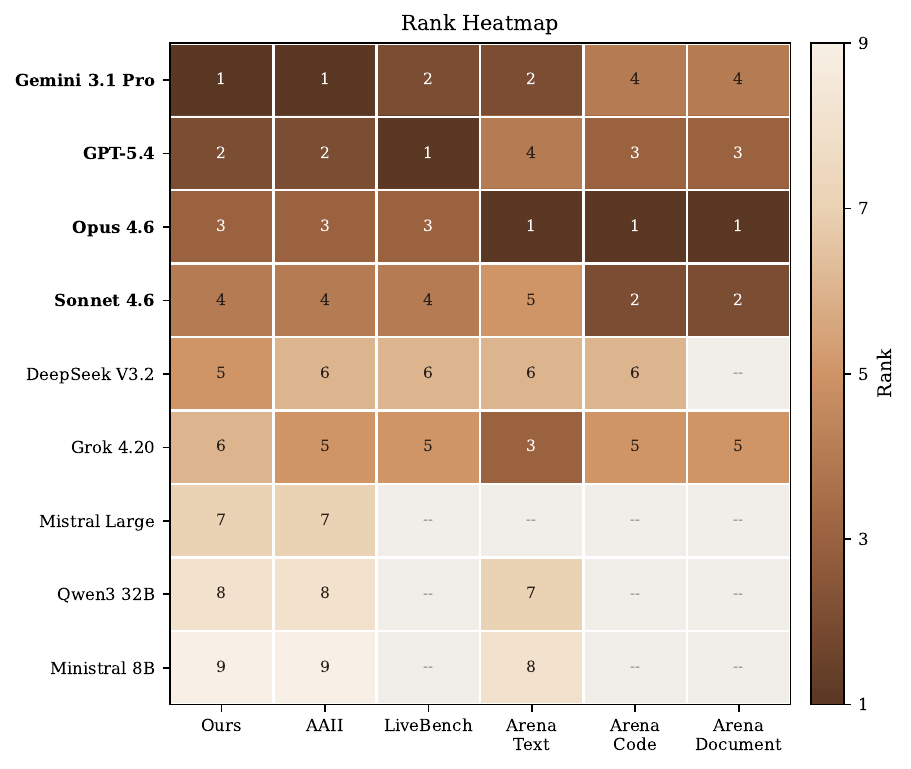}
\caption{Rank heatmap comparing our aggregate Turing-score ordering against AAII, LiveBench, and the Arena rankings. Darker cells indicate stronger ranks, and blank cells indicate that no directly matching public entry was listed. The main signal is broad agreement: frontier model families remain concentrated near the top despite large differences in evaluation mechanism.}
\label{fig:benchmark-rank-heatmap}
\end{figure}

\subsection{Additional Content on Complexity-Theoretic / Asymptotic GTT}

In the main text we only define the AGTTQ, the variant where advantage is bounded as a function of query rounds in the GTTQ (since naturally we expect the advantage of the distinguisher to decrease with more queries). The more rounds we give the distinguisher, however, we expect the advantage to increase. There are several possible ways to define such a test and comparator (in the experiments, we simply compute the advantage at different numbers of permitted distinguisher turns). For example:

\begin{definition}[RAGTT]\label{definition:asymptotic-rounds} The Round-Asymptotic Generalized Turing Test is defined the same as the GTTQ, except the distinguisher is told it will have at most $n$ rounds of communication with the unknown agent.

We define $A \geq^q_{\alpha(\cdot)} B$ if $\Pr[B~\text{succeeds when~ }A~\text{has }n~\text{querying rounds}] \leq \frac{1}{2} + \alpha(n)$. $\alpha(\cdot)$ is (normally) an increasing function; the bound is \emph{interesting} if $\lim_{x \rightarrow 1/2} \alpha^{-1}(x) = \infty$.
\end{definition}

We leave further theoretical study of distinguisher-round complexity to future work.

\subsection{Additional Content on Fixed Distinguishers GTT}

\begin{definition}[Fixed Distinguisher Turing Scores]\label{definition:FDTuringScores}
For a given finite universe of agents $\mathcal{M}$, a fixed trusted distinguisher
$D \in \mathcal{M}$, and for any $A \in \mathcal{M}\setminus\{D\}$, define
$q_{D,X,Y}$ as the probability that distinguisher $D$ \emph{accepts} actor $X$ as
target $Y$ when $X$ is instructed to imitate $Y$ (i.e., outputs $1$).

\begin{itemize}
\item $A$'s "Average Fooling Score'' with respect to $D$,
    \[
    F_D(A) = (|\mathcal{M}| - 2)^{-1}
    \sum_{C \notin \{A,D\}} q_{D,A,C}
    \]
\item $A$'s "Resistance to Imitation Score'' (analogous to the Distinguishing Score, measuring other models' performance when imitating $A$) with respect to $D$,
    \[
    R_D(A) = (|\mathcal{M}| - 2)^{-1}
    \sum_{X \notin \{A,D\}} (1 - q_{D,X,A})
    \]
\item $A$'s "Average Trusted Turing Score'' with respect to $D$,
    \[
    T_D(A) = \frac{1}{2}F_D(A) + \frac{1}{2}R_D(A)
    \]
\end{itemize} 
\end{definition}

\subsection{Additional Visualizations for Section 7}

\begin{figure}[t]
\centering
\includegraphics[width=\linewidth]{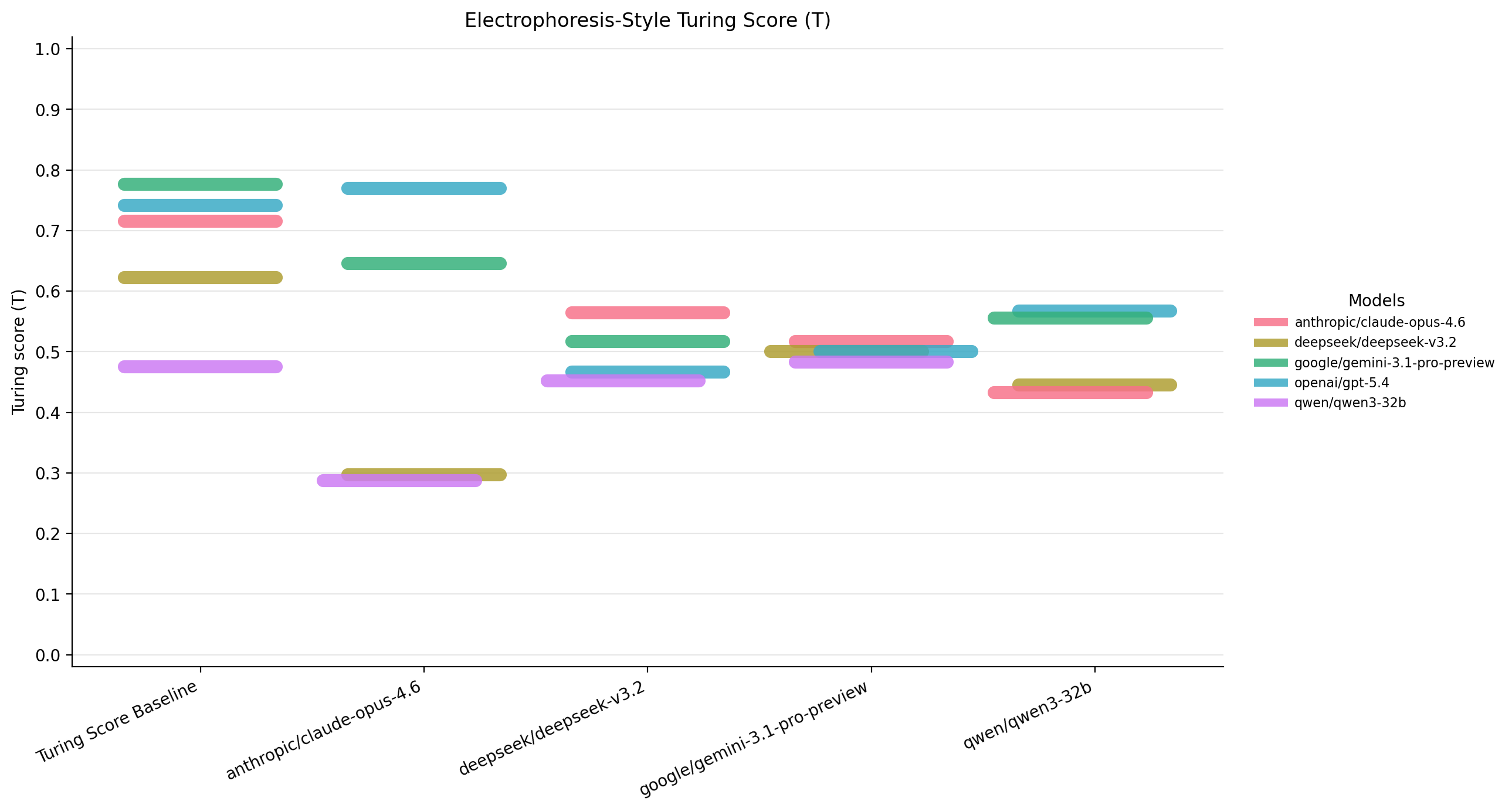}
\caption{Turing-Score-based ranking of models separated by fixed distinguisher.}
\label{fig:electrophoresis}
\end{figure}

\begin{figure}[t]
\centering
\includegraphics[width=\linewidth]{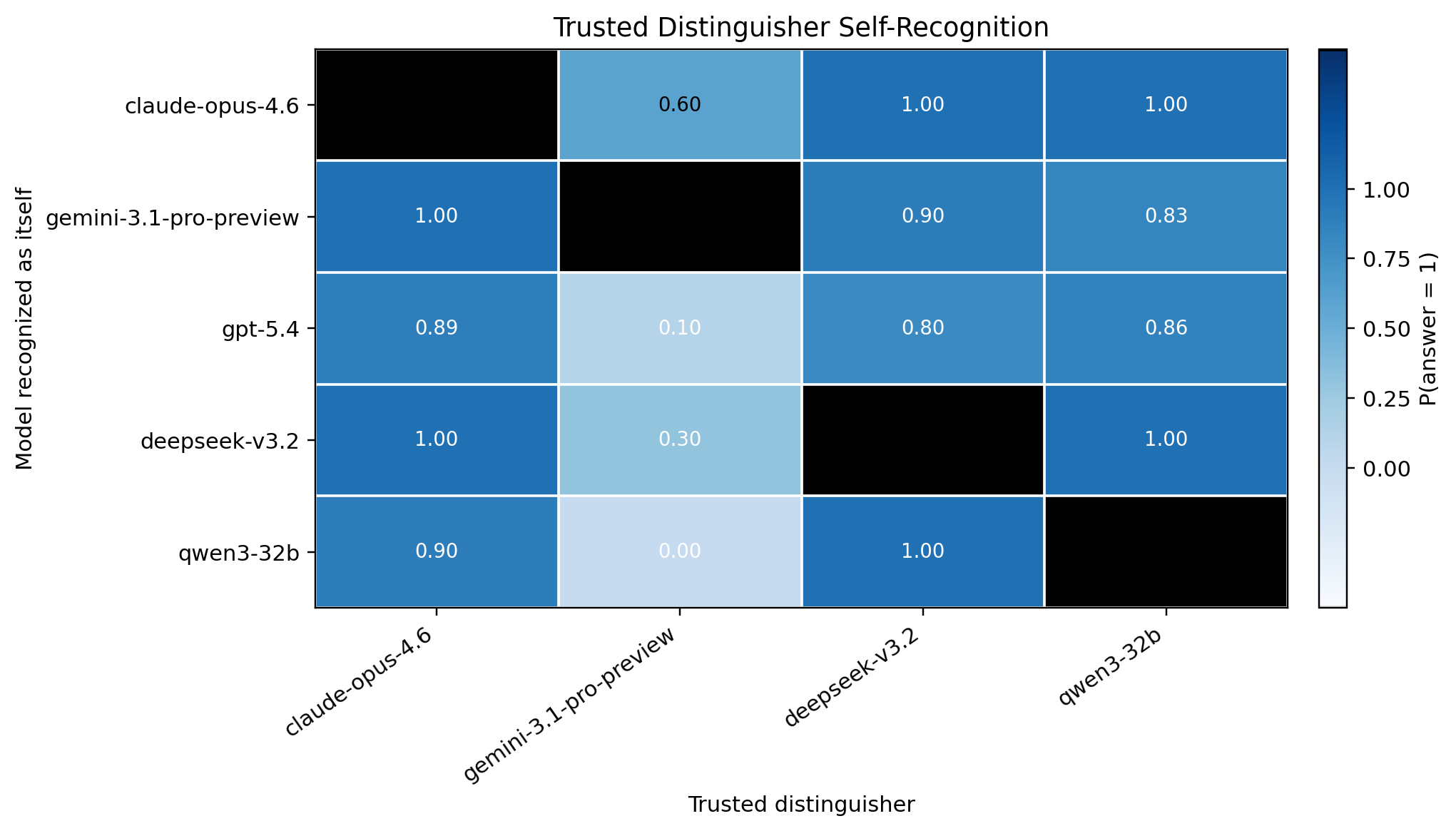}
\caption{Probability of each fixed distinguisher (x-axis) correctly recognizing each actor model (y-axis) when actor and target are the same model.}
\label{fig:FD-self-recognition}
\end{figure}

\begin{figure}[t]
\centering
\includegraphics[width=\linewidth]{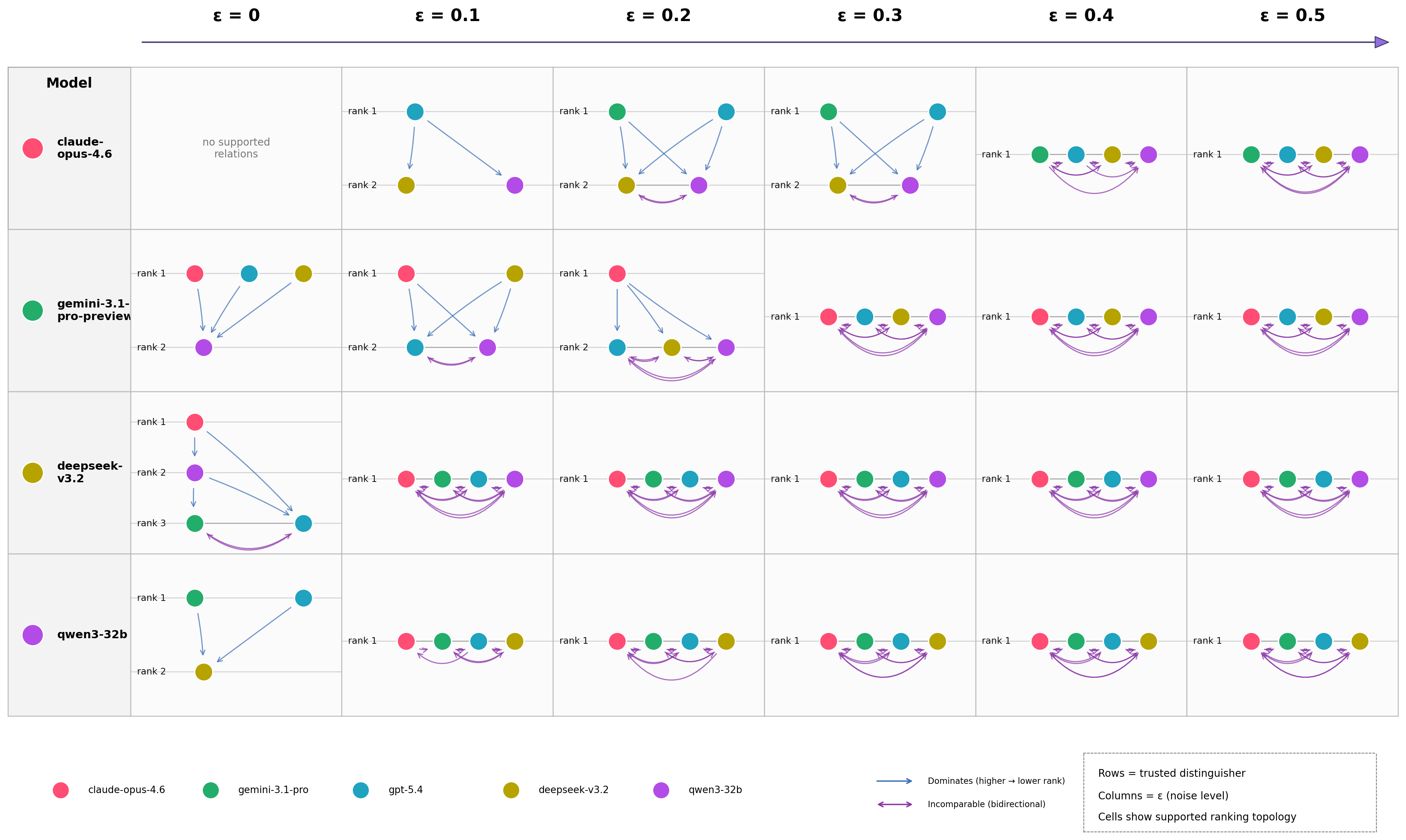}
\caption{Directed graphs where $A \to B$ if $A\succeq_{\epsilon; D} B\Longleftrightarrow \widehat d(A,B)\le \epsilon$. Models without incoming or outgoing edges are omitted.}
\label{fig:FD-graph-composite}
\end{figure}


\clearpage

\end{document}